\providecommand{\citep}[1]{\cite{#1}}
\providecommand{\citet}[1]{\cite{#1}}
\newtheorem{Definition}{Definition}
\newtheorem{Proposition}{Proposition}
\newtheorem{Theorem}{Theorem}
\newtheorem{Assumption}{Assumption}
\newtheorem{Remark}{Remark}
\newcommand{\R}{\mathbb{R}}
\title{\LARGE \bf
Balancing Multi-modal Sensor Learning via Multi-objective Optimization
}
\author{Heshan Fernando, Quan Xiao, Parikshit Ram, Yi Zhou, Horst Samulowitz, Nathalie Baracaldo, and Tianyi Chen%
\thanks{This work was supported by IBM through the IBM-Rensselaer Future of Computing Research Collaboration. Heshan Fernando and Tianyi Chen are with Rensselaer Polytechnic Institute, Troy, NY, USA. Parikshit Ram, Yi Zhou, Horst Samulowitz, and Nathalie Baracaldo are with IBM Research, USA. 
% Soham Dan is with Microsoft, USA. 
Quan Xiao and Tianyi Chen are with Cornell University, Ithaca, NY, USA.}%
}
\begin{document}

\maketitle
\thispagestyle{empty}
\pagestyle{empty}

\begin{abstract}
Learning-enabled control systems increasingly rely on multiple sensing modalities (e.g., vision, audio, language, etc.) for perception and decision support. A key challenge is that multi-modal sensor training dynamics are often imbalanced: fast-to-learn sensing channels dominate optimization, while slower channels remain underutilized, degrading reliability under sensing perturbations. Existing balancing strategies are largely heuristic and can require computationally intensive subroutines. In this paper, we reformulate multi-modal sensor learning as a multi-objective optimization (MOO) problem that explicitly prioritizes the worst-performing modality while retaining the nominal multi-modal sensor fusion objective. We then propose a simple gradient-based method, MIMO (multi-modal sensor learning via MOO), for the resulting formulation. We provide convergence guarantees and evaluate the method on standard multi-modal benchmarks. Results show improved balanced performance over state-of-the-art balanced multi-modal learning and MOO baselines, together with up to $\sim20\times$ reduction in subroutine computation time, highlighting the suitability of MIMO for resource-constrained control pipelines. 
\end{abstract}

\section{Introduction}\label{sec:intro}

Modern control and robotics systems routinely fuse heterogeneous sensing channels, including cameras, microphones, lidar/radar, and natural-language or operator signals. In these systems, multi-modal sensor learning (MSL) serves as an upstream estimation and representation module whose training dynamics strongly affect downstream control quality and robustness. Unlike uni-modal sensor pipelines, multi-modal sensor fusion learning can leverage complementary information across sensing channels, improving state inference and task performance under partial observability. This advantage has made multi-modal sensor fusion architectures central to embodied AI and autonomous decision-making stacks \citep{wang2022tag, shridhar2020alfworld, zhang2019neural, reed2022generalist}.

A common sensor-fusion approach uses separate encoders for each sensor modality to transform data into feature representations that are then ``fused'' before further processing. The fused representation is subsequently processed by a model head to produce the output. In the two-modality case, the multi-modal sensor learning problem can be formulated as:
\begin{equation}
    \min_{\vartheta_{mm}, \theta_{m_1}, \theta_{m_2}} f_{mm}(\vartheta_{mm}, \theta_{m_1}, \theta_{m_2}) \label{eq:multi-modal-intro}
\end{equation}
where $f_{mm}$ is the multi-modal loss, $\theta_{m_1}$ and $\theta_{m_2}$ are encoders for modalities $m_1$ and $m_2$, and $\vartheta_{mm}$ is the fusion head that processes the fused features. The goal is then to optimize $f_{mm}$ in \eqref{eq:multi-modal-intro} using standard optimization algorithms (e.g., SGD).
\begin{figure*}[t]
    \centering
\includegraphics[width=\linewidth]{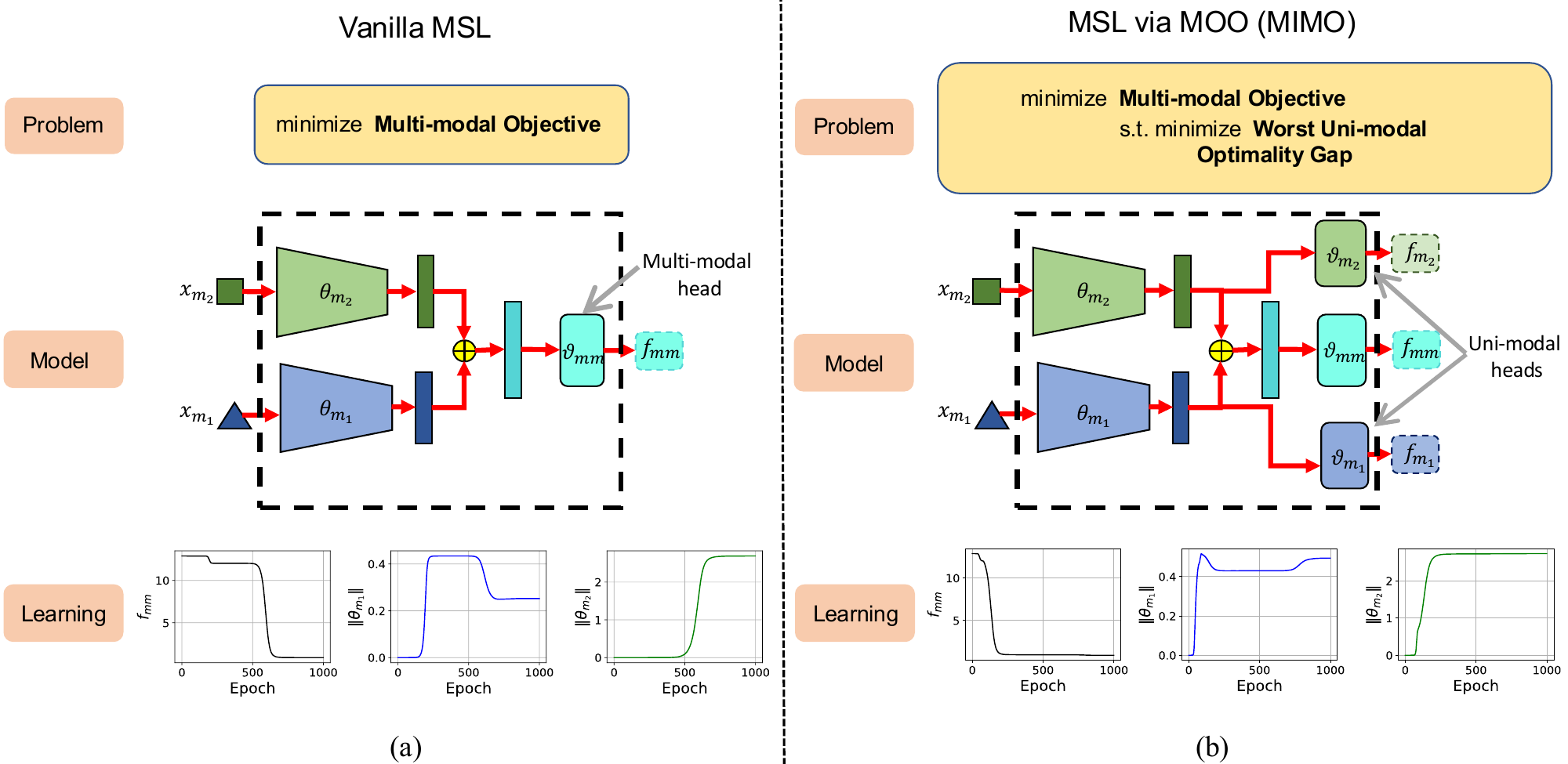}
    \caption{\textbf{Balancing multi-modal sensor learning via multi-objective optimization.} \textbf{(a)} Optimizing the standard fusion objective can lead to slower convergence because fast-to-learn modalities dominate the optimization. \textbf{(b)} We propose MIMO (MSL via MOO), which optimizes a modified objective that prevents single-modality dominance, yielding faster convergence.
    }
    \label{fig:overview}
    % \vspace{-0.5cm}
\end{figure*}

\subsection{Imbalance issue in multi-modal sensor learning} 
While multi-modal sensor learning can outperform uni-modal sensor learning, recent studies show that standard training may still underperform the best single modality \citep{wang2020makes, huang2022modality}. For control-oriented systems, this issue is particularly critical: if one sensing stream dominates the updates, the learned estimator can become brittle to disturbances, occlusions, or sensor faults at deployment time. Prior works \citep{ma2022multimodal, hu2022shape, peng2022balanced} attribute this behavior to dominant modalities suppressing slower but informative modalities during optimization. Using a toy example, Figure \ref{fig:overview}(a) illustrates this imbalance: $\theta_{m_1}$ and $\theta_{m_2}$ evolve at different rates, producing slower and biased convergence of the fused model.

Recent studies have provided theoretical explanations for this inefficiency by analyzing the training process of late-fusion models. In \citep{allen2020towards, huang2022modality, han2022trusted}, the emergence of a dominant modality is explained through the concept of modality competition. In \citep{zhang2023theory}, the modality dominance is attributed to statistical characteristics of multi-modal data. Beyond these theoretical studies, empirical works have developed methods for balanced learning of different modalities, thereby improving performance \citep{peng2022balanced, fujimori2020modality, yao2022modality, peng2022balanced, li2023boosting, wei2024innocent}. However, a principled algorithm with low per-iteration computational complexity for balanced multi-modal sensor learning, together with convergence guarantees, is still missing.

\subsection{Our contributions}
We address modality imbalance through a control-motivated optimization reformulation. The main idea is to shape training dynamics so that slow-to-learn sensing channels are not suppressed by dominant channels. We augment the fused objective with modality-specific objectives and enforce a lexicographic priority on the worst-performing uni-modal sensor channel. Concretely, we reformulate the multi-modal sensor learning problem in \eqref{eq:multi-modal-intro} as a lexicographic MOO problem \citep{miettinen1999nonlinear}, given by
\begin{align}
    &\min_{\vartheta_{mm}, \theta_{m_1}, \theta_{m_2}} f_{mm}(\vartheta_{mm}, \theta_{m_1}, \theta_{m_2}) \label{eq:mml-via-moo-intro}\\
    &\text{s.t} ~~\theta_{m_1}, \theta_{m_2}, \vartheta_{m_1}, \vartheta_{m_2} \nonumber\\ &~~~~~\in  { \arg\!\!\!\!\!\min_{ \substack{\theta_{m_1}, \theta_{m_2}\\ \vartheta_{m_1}, \vartheta_{m_2}}}\max_{ k\in\{1,2\}} \left( f_{m_k}(\vartheta_{m_k}, \theta_{m_k}) - f^*_{m_k} \right)}.\nonumber
\end{align}
where $f_{m_k}$ for $k\in\{1, 2\}$ are uni-modal sensor objectives induced by separate uni-modal sensor heads $\vartheta_{m_1}$ and $\vartheta_{m_2}$ (see Figure \ref{fig:overview}(b)), and $f^*_{m_k} = \min_{\vartheta_{m_k}, \theta_{m_k}} f_{m_k}(\vartheta_{m_k}, \theta_{m_k})$ for all $k\in\{1, 2\}$. The formulation in \eqref{eq:mml-via-moo-intro} extends naturally to an arbitrary number of modalities in \eqref{eq:mml-via-moo}; here we focus on the two-modality case for clarity.
We then propose MIMO (multi-modal sensor learning via MOO), a lightweight gradient-based algorithm for the resulting formulation. As shown in Figure \ref{fig:overview}(b), our method alleviates modality imbalance and converges quickly to the global optimum. Specifically, the weights of $\theta_{m_1}$ and $\theta_{m_2}$ are learned at similar rates (Figure \ref{fig:overview}(b), $\Vert \theta_{m_1}\Vert$ and $\Vert \theta_{m_2}\Vert$), leading to faster convergence.

Our main contributions are threefold. First, we introduce a lexicographic MOO reformulation of MSL that prioritizes worst-channel performance while preserving the original fused objective. Second, we develop MIMO, a simple first-order algorithm with low per-iteration overhead and no expensive inner-loop balancing subroutine. Third, we provide convergence guarantees and empirical evidence across standard benchmarks, showing improved balanced performance with up to $20\times$ lower subroutine time versus strong baselines.

\section{Preliminaries}\label{sec:background}

In this section, we introduce basic concepts in multi-modal sensor learning \citep{wang2020makes} and MOO \citep{miettinen1999nonlinear} used to build a formulation for balanced multi-modal sensor learning.

\subsection{Multi-modal sensor learning (MSL)}\label{sec:mml}

Consider the classification problem on a multi-modal sensor dataset $\mathcal{D}_{mm} := \{ x^{(m_1)}_i, x^{(m_2)}_i, \dots, x^{(m_K)}_i, y_i\}_{i=1}^N$, which consists of $N$ input samples $x^{(m_k)}_i$ from $K$ modalities $m_k$, where $k\!\in\![K] := \{1, 2, \dots, K\}$, together with the corresponding labels $y_i$. 
% Specifically, $\mathcal{D}_{mm} := \{ x^{(m_1)}_i, x^{(m_2)}_i, \dots, x^{(m_K)}_i, y_i\}_{i=1}^N$. 
Unlike the uni-modal sensor case, we need a ``fusion'' strategy to combine inputs from different modalities before producing the output used in the loss function. 
Depending on whether fusion happens during feature extraction, after feature extraction, or in a hybrid manner, fusion strategies can be classified as {\em early, late, or hybrid fusion}, respectively \citep{li2023boosting}. In early fusion, data from different modalities are processed jointly starting from the raw inputs to obtain multi-modal features. In late fusion, separate encoders are used to extract uni-modal sensor features, which are then fused at the final stage of the model. Any combination of early and late fusion is referred to as hybrid fusion.
We consider the late-fusion setting, where different modalities are fused after extracting features from each modality (see Figure \ref{fig:overview}(a)). Common fusion operations include summing the extracted features (summation) or concatenating them (concatenation) \citep{peng2022balanced}.
Let $\theta_{m_k}$ denote the parameters of the encoder that extracts features from inputs $x^{(m_k)}_i$ for all $k\in[K]$, and $\vartheta_{mm}$ denote the parameters of the fusion head that maps the fused features to the target output. Then, the problem of finding the optimal multi-modal sensor fusion model can be formulated as
\begin{equation}
    \min_{\boldsymbol{\Theta_{mm}}\in \mathbb{R}^{d_{mm}}} f_{mm}(\vartheta_{mm}, \theta_{m_1}, \theta_{m_2}, \dots, \theta_{m_K}), \label{eq:multi-modal}
\end{equation}
where $\boldsymbol{\Theta_{mm}}:= [\vartheta_{mm}; \theta_{m_1}; \theta_{m_2}; \dots ; \theta_{m_K}]$, and $f_{mm}:\mathbb{R}^{d_{mm}}\mapsto\mathbb{R}$ is the multi-modal sensor fusion objective defined over all input modalities in $\mathcal{D}_{mm}$. Solving \eqref{eq:multi-modal} yields the optimal fusion model $\boldsymbol{\Theta^*_{mm}}$.

\textbf{Uni-modal bias in sensor-fusion training.} As described in Section \ref{sec:intro}, imbalance in sensor-fusion training can lead to poor fused performance, even relative to uni-modal sensor learning. To illustrate this issue, we use an example from \citep{zhang2023theory} involving a multi-modal regression problem with a two-layer fully connected network, where one layer serves as the encoder and the other serves as the fusion head, together with concatenation fusion. A toy implementation of this example is shown in Figure \ref{fig:overview}, and the corresponding implementation details are given in Appendix \ref{app:toy-illustration}.

Concretely, consider a two-modality dataset $\mathcal{D}_{mm} := \{ x^{(m_1)}_i, x^{(m_2)}_i , y_i\}_{i=1}^N$ with $x^{(m_1)}_i \in \mathbb{R}^{d_1}$, $x^{(m_2)}_i \in \mathbb{R}^{d_2}$, and $y_i \in \mathbb{R}$ for all $i\in[N]$. Let the empirical input and input-output correlation matrices for modality $m_1$ be $C_{m_1}$ and $C_{ym_1}$ (and similarly for modality $m_2$). Let the cross-correlation matrices between $m_1$ and $m_2$ be $C_{m_1m_2}$ and $C_{m_2m_1}$. The exact definitions of these matrices are given in Appendix \ref{app:toy-illustration}. 

\textbf{Illustration with the two-layer model.} For the two-layer fully connected late-fusion multi-modal network with concatenation fusion, we choose the uni-modal sensor encoder parameters as $\theta_{m_1}\in \mathbb{R}^{d_1 \times d_h}$ and $\theta_{m_2} \in \mathbb{R}^{d_2 \times d_h}$, where $d_h$ is the dimensionality of the encoder layer for each modality. Note that the fusion head $\vartheta_{mm}$ can be partitioned into modality-specific components $\vartheta_{mm, m_1}$ and $\vartheta_{mm, m_2}$ as $\vartheta_{mm} = [ \vartheta_{mm, m_1}; \vartheta_{mm, m_2}]$, with $\vartheta_{mm, m_1}, \vartheta_{mm, m_2} \in \mathbb{R}^{d_h}$. Hence, we denote $\boldsymbol{\Theta_{mm}} := [ \vartheta_{mm, m_1} ;  \vartheta_{mm, m_2};\theta_{m_1};  \theta_{m_2}]$. All model parameters are initialized close to zero.

For a given data index $i$, the output of the multi-modal sensor fusion model can then be given as
% \begin{equation}
%     \hat{y}_i = \vartheta_{mm, m_1}\theta_{m_1}x^{(m_1)}_i + \vartheta_{mm, m_2}\theta_{m_2}x^{(m_2)}_i.
% \end{equation}
\begin{equation}
    \hat{y}_i = \vartheta_{mm, m_1}^\top\theta_{m_1}^\top x^{(m_1)}_i + \vartheta_{mm, m_2}^\top\theta_{m_2}^\top x^{(m_2)}_i.
\end{equation}
The decoupled nature of the output with respect to the modalities is a consequence of the late-fusion architecture with concatenation fusion. The corresponding regression loss is given by $f_{mm}(\boldsymbol{\Theta_{mm}}) = \frac{1}{N}\sum_{i=1}^{N}(y_i - \hat{y}_i)^2$. We can then derive the gradient of $f_{mm}$ as follows (see Appendix \ref{app:toy-illustration} for details):
{\small
\begin{equation}
    \nabla_{\boldsymbol{\Theta_{mm}}}f_{mm}(\boldsymbol{\Theta_{mm}}) = [ \theta_{m_1}^\top\Psi_1;  \theta_{m_2}^\top\Psi_2;  \Psi_1\vartheta_{mm, m_1}^\top ;  \Psi_2\vartheta_{mm, m_2}^\top],
\end{equation}
}
where
\begin{align}
    \Psi_1 &= C_{ym_1} - \vartheta_{mm, m_1}\theta_{m_1}C_{m_1} - \vartheta_{mm, m_2}\theta_{m_2}C_{m_2m_1},\nonumber\\
    \Psi_2 &= C_{ym_2} - \vartheta_{mm, m_1}\theta_{m_1}C_{m_1m_2} - \vartheta_{mm, m_2}\theta_{m_2}C_{m_2}.\nonumber
\end{align}
To ensure $\nabla_{\boldsymbol{\Theta_{mm}}}f_{mm}(\boldsymbol{\Theta_{mm}})=0$ (i.e., stationarity), it suffices to achieve some combination of $\Psi_1=0$, $\Psi_2=0$, $\theta_{m_k}=0$, and $\vartheta_{m_k}=0$. However, for the model to have ``learned'' modality $m_k$, the weights corresponding to that modality should in general be nonzero, i.e., $\theta_{m_k}\neq0$ and $\vartheta_{m_k}\neq0$. Thus, to reach a stationary point while learning both modalities, we ideally want model parameters that satisfy $\Psi_k=0$, $\theta_{m_k}\neq0$, and $\vartheta_{m_k}\neq0$ for all $k \in\{1, 2\}$.

\textbf{Superficial modality preference.} Since the model weights are initialized near zero, the model first visits a uni-modal sensor stationary point where $\Psi_k=0$, $\vartheta_{mm, m_{3-k}}=0$, and $\boldsymbol{\Theta}_{m_{3-k}}=0$ for some $k\in\{1, 2\}$, before eventually reaching the ideal stationary point that achieves $\Psi_1=\Psi_2=0$, $\vartheta_{mm, m_{3-k}}\neq0$, and $\boldsymbol{\Theta}_{m_{3-k}}\neq0$ for all $k \in\{1, 2\}$. Which stationary point is visited first depends on the dataset statistics \citep{zhang2023theory} (see Appendix \ref{app:toy-illustration} for more details). Furthermore, it can be shown that the uni-modal sensor stationary point visited first is decoupled from which modality contributes more to minimizing $f_{mm}$; this phenomenon is known as ``superficial modality preference'' \citep{zhang2023theory}. For example, in the toy MSL task shown in Figure \ref{fig:overview}, the drop in the objective value $f_{mm}$ is smaller when modality $m_1$ is learned (when the norm of the encoder weights $\Vert \theta_{m_1} \Vert$ attains a nonzero value) than when modality $m_2$ is learned, even though modality $m_1$ is learned first. Thus, imbalanced modality learning can lead to models that overfit a fast-learning modality and are therefore suboptimal for the overall multi-modal sensor fusion objective.

\subsection{Lexicographic MOO}

In this section, we introduce the key MOO formulation that we use to alleviate the imbalance issue in MSL discussed in Section \ref{sec:mml}. Specifically, we introduce lexicographic MOO, which is useful when one can assign an \emph{a priori} order to the learning objectives based on preference. In other words, optimizing lower-priority objectives is constrained by the optimality of higher-priority objectives. More concretely, consider a set of objectives $f_m:\mathbb{R}^d\mapsto\mathbb{R}$ for $m\in[M]$. Then, lexicographic MOO problem can be formulated as \citep{miettinen1999nonlinear}
\begin{equation}
    \text{lex}\min_{\boldsymbol{\Theta} \in \mathbb{R}^d} F^{\rm Lex}(\boldsymbol{\Theta}) := f_1(\boldsymbol{\Theta}), f_2(\boldsymbol{\Theta}), \dots, f_M(\boldsymbol{\Theta}) \label{eq:lex-prob-form}
\end{equation}
where the index of the objectives specifies the order in which the optimality of each objective should be achieved. For the bi-objective case, \eqref{eq:lex-prob-form} can be written as the constrained optimization problem
\begin{equation}\label{eq:bi-lex-prob-form}
    \min_{\boldsymbol{\Theta} \in \mathbb{R}^d}  ~~f_2(\boldsymbol{\Theta}) ~~~ \text{s.t.} ~~ \boldsymbol{\Theta} \in \arg\min_{\boldsymbol{\Theta} \in \mathbb{R}^d} f_1(\boldsymbol{\Theta}).  
\end{equation}
Note that this method allows us to incorporate prior knowledge about the problem into the optimization process. It can be shown that the solution of \eqref{eq:lex-prob-form} is Pareto optimal \citep{miettinen1999nonlinear}.

\section{Balanced MSL via MOO}\label{sec:balanced-mml-via-moo}

In this section, we first present our proposed reformulation of the multi-modal sensor learning problem to address the imbalance issue discussed in the previous section. We then detail the corresponding algorithm and provide a convergence analysis for the proposed method.

\subsection{Problem formulation}

In this section, we modify the original multi-modal sensor learning problem \eqref{eq:multi-modal} to encourage balanced learning across modalities. Intuitively, the slow-to-learn modality will have the largest optimality gap. To alleviate this imbalance, we modify the MSL formulation to encourage learning of the modality with the worst optimality gap. Specifically, we seek to optimize the multi-modal sensor fusion objective subject to the optimality of the worst-performing (in terms of optimality gap) uni-modal sensor objective, namely,
\begin{align}
    &\min_{\boldsymbol{\Theta_{mm}}\in \mathbb{R}^{d_{mm}}} f_{mm}(\boldsymbol{\Theta_{mm}} ) \label{eq:mml-via-moo} \\&~~\text{s.t} ~~\{\boldsymbol{\Theta_{m_k}}\}_{k=1}^K \in \arg\!\!\min_{\{\boldsymbol{\Theta_{m_k}'}\}_{k=1}^K} \max_{k\in[K]} \left( f_{m_k}(\boldsymbol{\Theta_{m_k}'}) - f^*_{m_k} \right), \nonumber
\end{align}
where $\boldsymbol{\Theta_{m_k}} = [\vartheta_{m_k}; \theta_{m_k}]$ with $\vartheta_{m_k}$ denoting the uni-modal sensor head dedicated to modality $m_k$, and $f^*_{m_k} = \min_{\boldsymbol{\Theta_{m_k}}} f_{m_k}(\boldsymbol{\Theta_{m_k}})$ for all $k\in[K]$. Intuitively, formulation \eqref{eq:mml-via-moo} learns the multi-modal sensor head by optimizing the fusion objective $f_{mm}$ over the backbone model $\theta_{m_k}$, which optimizes the worst-performing modality $\max_{{\scriptscriptstyle k\in [K]}} \left( f_{m_k}(\boldsymbol{\Theta_{m_k}}) - f^*_{m_k} \right)$.

\begin{Remark}
Note that \eqref{eq:mml-via-moo} follows the lexicographic MOO structure in \eqref{eq:bi-lex-prob-form}, where $f_2(\boldsymbol{\hat{\Theta}_{mm}})\!\! = \!\!f_{mm}(\boldsymbol{\Theta_{mm}})$ and $f_1(\boldsymbol{\hat{\Theta}_{mm}})\!\! =\!\! \max_{k\in[K]} \left( f_{m_k}(\boldsymbol{\Theta_{m_k}}) - f^*_{m_k} \right)$ with $\boldsymbol{\hat{\Theta}_{mm}}:= [\vartheta_{mm}; \vartheta_{m_1}; \dots ; \vartheta_{m_K}; \theta_{m_1}; \dots ; \theta_{m_K}]$. However, unlike in \eqref{eq:bi-lex-prob-form}, in \eqref{eq:mml-via-moo} only part of $\boldsymbol{\hat{\Theta}_{mm}}$ (the set of uni-modal sensor encoders) is shared between the two objectives, allowing for independent optimization of the non-shared part of $\boldsymbol{\hat{\Theta}_{mm}}$ (multi- and uni-modal sensor heads).
\end{Remark}
The only parameters shared between the uni-modal sensor and multi-modal sensor fusion objectives are the uni-modal sensor encoders $\theta_{m_k}$ for all $k\in[K]$. 
Note that the lexicographic optimization of the worst-performing uni-modal sensor objective and the multi-modal sensor fusion objective with respect to the shared parameters $\theta_{m_k}$ for $k\in[K]$ can be viewed as a simple bilevel optimization problem. 
Recently, \citep{shen2023penalty} introduced a reformulation of bilevel optimization as a single-level problem by penalizing lower-level constraints in the upper-level objective.
% Recently, \citep{shen2023penalty} proposed methods to penalize lower-level constraints to the upper level such that the bi-level optimization problem can be reformulated as a single-level optimization problem. 
Leveraging this view, we can obtain the single level MIMO objective by rewriting \eqref{eq:mml-via-moo} as
\begin{align}\label{eq:mimo} 
 \min_{\boldsymbol{\hat{\Theta}_{mm}}\in \mathbb{R}^{\hat{d}_{mm}}} \hat{f}_{mm}(\boldsymbol{\hat{\Theta}_{mm}}) 
    &:= f_{mm}(\boldsymbol{\Theta_{mm}})\\ &~~~+\lambda \max_{{\scriptscriptstyle k\in[K]}} \left( f_{m_k}(\boldsymbol{\Theta_{m_k}}) - f^*_{m_k} \right), \nonumber
\end{align}

where $\boldsymbol{\hat{\Theta}_{mm}} = [\vartheta_{mm}; \{\vartheta_{m_1}\}_{k=1}^K; \{\theta_{m_k}\}_{k=1}^K]$, and $\lambda>0$ is a problem dependent penalty parameter.  

\textbf{How does MIMO alleviate modality imbalance?} To understand how this new formulation alleviates modality imbalance, let us revisit the two-layer fully connected late-fusion multi-modal network with concatenation fusion, for the two modality case ($K=2$), now augmented with additional uni-modal sensor heads (linear layers) to obtain uni-modal sensor objectives $f_{m_1}$ and $f_{m_2}$ (see Figure \ref{fig:overview}). Let the weight vectors corresponding to the uni-modal sensor heads be $\vartheta_{m_1}\in \mathbb{R}^{d_h}$ and $\vartheta_{m_2}\in \mathbb{R}^{d_h}$. Then, the new parameter vector can be written as $\boldsymbol{\hat{\Theta}_{mm}} := [\vartheta_{mm, m_1} ;  \vartheta_{mm, m_2}; \vartheta_{m_1}; \vartheta_{m_2};  \theta_{m_1};  \theta_{m_2}]$. The corresponding uni-modal sensor objectives are given by $f_{m_k}(\boldsymbol{\Theta_{m_k}}) = \frac{1}{N}\sum_{i=1}^{N}(y_i - \hat{y}^{(m_k)}_i)^2$ for $k\in[2]$, where $\hat{y}^{(m_k)}_i = \vartheta_{m_k}^\top\theta_{m_k}^\top x^{(m_k)}_i$ for each data index $i\in[N]$. With these definitions, we can rewrite the gradients of each layer of the network with additional uni-modal sensor heads as
{\small
\begin{align}\label{eq:fhat-grad}
    \!\!\!\!\!\nabla_{\boldsymbol{\hat{\Theta}_{mm}}}\hat{f}_{mm}(\boldsymbol{\hat{\Theta}_{mm}}) &= [ \theta_{m_1}^\top(\Psi_1 + \lambda_1\check{\Psi}_1 );  \vartheta_{mm, m_2}^\top(\Psi_2+ \lambda_2\check{\Psi}_2 ); \nonumber\\
    &\Psi_1\vartheta_{mm, m_1}^\top ;  \Psi_2\vartheta_{mm, m_2}^\top;\lambda_1\check{\Psi}_1\theta_{m_1}^\top; \lambda_2\check{\Psi}_2\theta_{m_2}^\top],
\end{align}}
where $\check{\Psi}_1 \!\!=\!\! C_{ym_1}\!\! - \vartheta_{m_1}\theta_{m_1}C_{m_1}$, $\check{\Psi}_2 \!\!=\!\! C_{ym_2} \!- \vartheta_{m_2}\theta_{m_2}C_{m_2}$, and $\lambda_i\!=\!\lambda$ if $i\!=\!\arg\!\!\!\max\limits_{{\scriptscriptstyle k\in\{1,2\}}} \left( f_{m_k}(\boldsymbol{\Theta_{m_k}}) - f^*_{m_k} \right)$, and $0$ otherwise.
This modification, introduced by incorporating modality-specific objectives, helps balance multi-modal sensor learning in an intuitive way. Suppose modality $m_1$ is quick to learn. Then, initially, the weights in the $m_2$ component of the model remain close to zero, so  optimality gap of $m_2$ is larger than that of $m_1$. As a result, $\lambda_1=0$ and $\lambda_2=\lambda$. Due to the amplified gradient induced by the $\lambda$ weighting, the weights in the $m_2$ component are updated more rapidly through the gradient contribution of the $f_{m_2}$ objective, while modality $m_1$ is learned through the gradient contribution of $f_{mm}$. At stationarity, since all gradient components in \eqref{eq:fhat-grad} must be zero, the gradient components contributed by $f_{mm}$ and $f_{m_2}$ must each vanish. Hence, the model achieves optimality for the original multi-modal sensor fusion objective $f_{mm}$ while simultaneously learning modality $m_2$.

\textbf{Formal justification of penalty reformulation. } 
To provide more insights on why optimizing MIMO in \eqref{eq:mimo} alleviates the imbalanced modality issue, we formally prove the approximate equivalence between the penalty reformulation \eqref{eq:mimo} with the worst-case MOO problem \eqref{eq:mml-via-moo}. 

\begin{Assumption}[PL condition of objectives]\label{ass:PL}
We assume that each modality objective $f_{m_k}(\boldsymbol{\Theta_{m_k}})$ satisfies the Polyak-Łojasiewicz (PL) condition with $\mu$, for all $k\in [K]$, i.e. 
\begin{align*}
\|\nabla f_{m_k}(\boldsymbol{\Theta_{m_k}})\|^2\geq\mu (f_{m_k}(\boldsymbol{\Theta_{m_k}})-f_{m_k}^*). 
\end{align*}
\end{Assumption} 
PL condition is commonly used in bilevel optimization as a generalization of strongly convexity \citep{shen2023penalty,xiao2023generalized,kwon2023penalty,chen2024finding}, and holds for various non-convex applications such as linear quadratic regulator models \citep{fazel2018global}, phase retrieval \citep{sun2018geometric}, over-parameterized neural network \citep{liu2022loss}, including convolutional neural network \citep{chenover}, residual network \citep{marion2023implicit}, etc. Next, we introduce the notion of smooth functions.

\begin{Definition}\label{def:smooth}
    A differentiable function $g:\R^d\rightarrow \R$ is $L$-smooth if for all $\boldsymbol{\Theta}_1, \boldsymbol{\Theta}_2\in\R^d$, the gradient of $g$ satisfies the condition
    \begin{equation}
        \|\nabla g(\boldsymbol{\Theta}_1) - \nabla g(\boldsymbol{\Theta}_2) \| \leq L \| \boldsymbol{\Theta}_1 - \boldsymbol{\Theta}_2\|.
    \end{equation}
\end{Definition}

% \QX{shall we move the assumptions to this section or move this result to C? }

\begin{Assumption}[Smoothness of objectives]\label{ass:smooth}
    Functions $f_{mm}(\boldsymbol{\Theta_{mm}})$ and $f_{m_k}(\boldsymbol{\Theta_{m_k}})$ are $L_{mm}$- and $L_{m_k}$-smooth (Definition \ref{def:smooth}), respectively, where $k\in[K]$.
\end{Assumption}
Assumption \ref{ass:smooth} is standard in the optimization \citep{nesterov2018lectures}.  Under above assumptions, we can prove that the solutions of penalty problem \eqref{eq:mimo} with certain penalty constant $\lambda$ are also the solutions to the $\epsilon$-relaxed worst-case problem \eqref{eq:mml-via-moo} as follows: 
\begin{align}
    &\min_{\boldsymbol{\Theta_{mm}}\in \mathbb{R}^{d_{mm}}} f_{mm}(\boldsymbol{\Theta_{mm}} ) \label{eq:mml-via-moo-relax} \\&~~\text{s.t} ~~\max_{k\in[K]} \left( f_{m_k}(\boldsymbol{\Theta_{m_k}}) - f^*_{m_k} \right)\leq \epsilon, \nonumber 
\end{align} 
Here, the constraint in \eqref{eq:mml-via-moo} enforces that all modalities be learned equally well and requires the worst-case modality $f_{m_k}$ to stay close to its best achievable performance $f^*_{m_k}$, thereby preventing it from being overlooked too much. We will then show that solving the MIMO problem~\eqref{eq:mimo} inherently addresses the relaxed worst-case problem \eqref{eq:mml-via-moo}. 

\begin{Theorem}[Relations of the penalty reformulation and the relaxed problem]\label{thm:penalty_equ}
Under Assumption \ref{ass:PL}--\ref{ass:smooth}, let $(\boldsymbol{\Theta_{mm}^*}, \boldsymbol{\Theta_{m_k}^*})$ be an $\epsilon_\lambda$-global (or local) solution of the penalty problem ~\eqref{eq:mimo} with $\lambda$. Then if $\lambda = \mathcal{O}(\epsilon_\lambda^{-1})$, then $(\boldsymbol{\Theta_{mm}^*}, \boldsymbol{\Theta_{m_k}^*})$ is an $\epsilon_\lambda$-global (or local) solution to the relaxed problem in~\eqref{eq:mml-via-moo-relax}, with some $\epsilon = \mathcal{O}(\epsilon_\lambda^{2})$. 
\end{Theorem} 

The proof of Theorem \ref{thm:penalty_equ} is deferred to Appendix \ref{app:penalty-equ-proof}, which relies on the semi-smoothness and the nonsmooth PL property of $\max_{k\in[K]} \left( f_{m_k}(\boldsymbol{\Theta_{m_k}}) - f^*_{m_k} \right)$, and generalizes the results in \citep{shen2023penalty} to the nonsmooth setting. Theorem \ref{thm:penalty_equ} shows that solving the MIMO problem \eqref{eq:mimo} is equivalent to solving the $\epsilon$-relaxed worst-case problem \eqref{eq:mml-via-moo-relax}, which motivates us to develop algorithm to solve MIMO problem \eqref{eq:mimo} to alleviate the learning unbalanced issue for the worst-case modality. 

\subsection{Algorithm development}

In this section, we present the algorithm used to solve the problem in \eqref{eq:mimo}. First, note that the penalty term in \eqref{eq:mimo} is nonsmooth because of the $\max$ operator. In general, this kind of nonsmoothness leads to the slower convergence rate $\mathcal{O}(1/\epsilon^2)$ for subgradient-based optimization, compared to $\mathcal{O}(1/\epsilon)$ for smooth gradient-based optimization \citep{nesterov2005smooth}. To alleviate this issue, prior work has proposed using smoothing functions \citep{lin2024smooth} that encode structure in the nonsmooth term, yielding convergence to an $\mathcal{O}(\epsilon)$-suboptimal point of the original nonsmooth problem within $\mathcal{O}(1/\epsilon)$ iterations. We next introduce the notion of smoothing functions.

\begin{algorithm}[t]
\caption{MIMO (multi-modal sensor learning via MOO) 
}\label{alg:mml-via-moo} 
\begin{algorithmic}
\STATE\textbf{input} $\hat{\boldsymbol{\Theta}}_{mm,1}:=[\vartheta_{mm,1}; \vartheta_{m_1,1}; \vartheta_{m_2,1}; \theta_{m_1,1}; \theta_{m_2,1}]$, learning rates $\{\eta_t\}_{t=1}^T$, penalty parameter $\lambda$, smoothing parameter $\mu$
\FOR {$t=1, \dots, T$}
    \STATE Compute gradient of $\hat{f}_{mm}$ given in \eqref{eq:penalty-smooth-cheb-two}
    \STATE Update $\hat{\boldsymbol{\Theta}}_{mm,t+1}$ following \eqref{eq:mimo-update}
\ENDFOR
\STATE \textbf{output} $\hat{\boldsymbol{\Theta}}_{mm,T+1}$
\end{algorithmic} 
\end{algorithm}

\begin{figure*}[t]
    \includegraphics[align=c,width=2.5in]{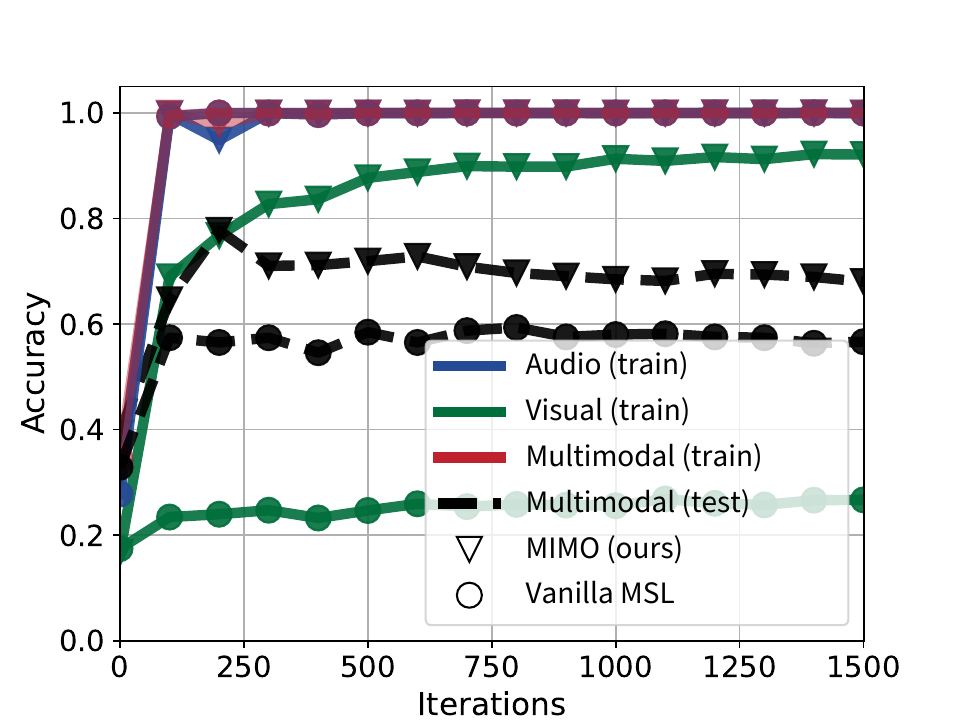}
    \includegraphics[align=c,width=1.74in]{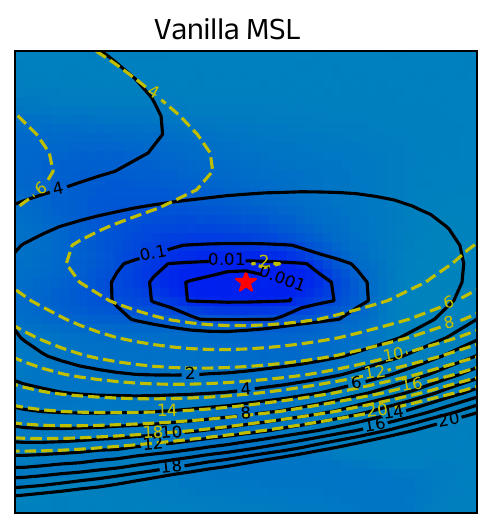}
    \includegraphics[align=c,width=2.35in]{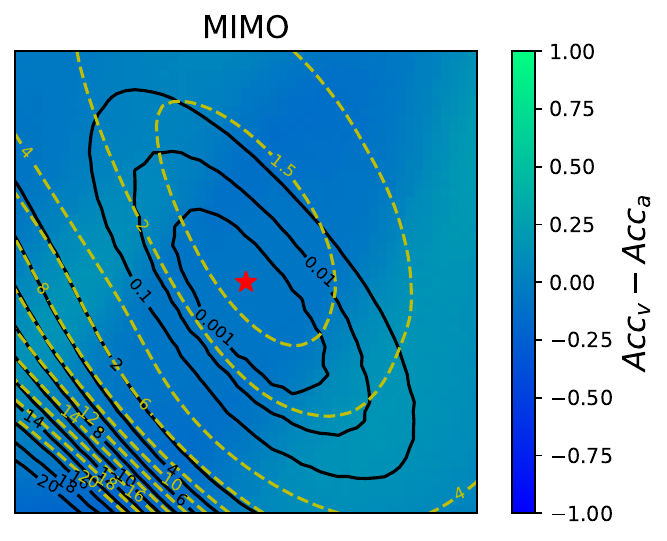}
    \caption{ 
 \textbf{Left:} Comparison of the training and testing performance of MIMO algorithm with vanilla MSL (joint training with sum fusion) on CREMA-D dataset. 
    \textbf{Middle and Right:} Comparison of the loss landscape of vanilla MSL and MIMO after 1500 iterations on CREMA-D dataset. The black contours (\textbf{---})
    denote the multi-modal training loss, and the yellow dashed contours 
    ({\color{yellow} \textbf{- - -}})
    denote the multi-modal testing loss. The red star
    ({\color{red} \ding{72}}) denotes the convergent point of each method.
    The color of the heatmap
    denotes the difference between uni-modal sensor training accuracies at the given point of the loss landscape, where blue (\raisebox{0.5ex}{\colorbox{blue}{\makebox[0.1cm][c]{}}}) denotes audio modality is dominating, green (\raisebox{0.5ex}{\colorbox{green}{\makebox[0.1cm][c]{}}}) denotes visual modality is dominating, and higher color intensity denotes larger differences in accuracy. 
    As illustrated by the training curves 
    and loss landscapes, 
    MIMO achieves lower multi-modal test loss (i.e. better generalization) by balancing the learning of each modality.
    }\label{fig:cremad-illustration}
\end{figure*}

\begin{Definition}\label{def:smoothing-func}
    For a continuous function $g:\R^d\rightarrow \R$, we call $g_\mu:\R^d\rightarrow \R$ a smoothing function of $g$ if for any $\mu>0$, $g_\mu$ is continuously differentiable in $\R^d$ and satisfies the conditions (1) $\lim\limits_{\boldsymbol{\Phi}\rightarrow\boldsymbol{\Theta}, \mu \downarrow 0}g_\mu(\boldsymbol{\Phi}) = g(\boldsymbol{\Theta})$; and (2) there exists constants $L$ and $\alpha>0$ independent of $\mu$, such that $g_\mu$ is $\left(L+\alpha \mu^{-1}\right)$-smooth (Definition \ref{def:smooth}).
\end{Definition}
Note that $g_\mu(\boldsymbol{\Theta}) := \mu\log\left(\sum_{m=1}^M\exp\left(\mu^{-1}g_{m}(\boldsymbol{\Theta}) \right)\right)$ is a smoothing function for $g(\boldsymbol{\Theta}) = \max_{m\in[M]} g_{m}(\boldsymbol{\Theta})$ \citep{lin2024smooth}, where $\mu>0$ is the smoothing parameter that controls the smoothness of $g_\mu$. Then, the smoothed version of \eqref{eq:mimo} is
\begin{align}\label{eq:penalty-smooth-cheb-two}
 &\min_{\boldsymbol{\hat{\Theta}_{mm}}\in \mathbb{R}^{\hat{d}_{mm}}} \hat{f}_{mm, \mu}(\boldsymbol{\hat{\Theta}_{mm}})
    := f_{mm}(\vartheta_{mm}, \theta_{m_1}, \theta_{m_2})\\
    &\quad\quad+\lambda \mu\log\left(\sum_{k=1}^2\exp\left(\mu^{-1}(f_{m_k}(\vartheta_{m_k}, \theta_{m_k}) - f^*_{m_k}) \right)\right).\nonumber
\end{align}    
With this formulation, we can apply gradient descent to the smoothed MIMO objective $\hat{f}_{mm, \mu}$ in \eqref{eq:penalty-smooth-cheb-two} as
\begin{align}\label{eq:mimo-update}
    \hat{\boldsymbol{\Theta}}_{mm,t+1} \!=\! \hat{\boldsymbol{\Theta}}_{mm,t} - \eta_t \nabla \hat{f}_{mm, \mu}(\hat{\boldsymbol{\Theta}}_{mm, t}),
\end{align}
where $t$ is the iteration index and $\eta_t$ is the learning rate. Specifically, for $K=2$, the MIMO update in \eqref{eq:mimo-update} can be decomposed into updates for each component of $\hat{\boldsymbol{\Theta}}_{mm}$ as
\begin{align*}
    \vartheta_{mm, t+1} &= \vartheta_{mm, t} - \eta_t \nabla_{\vartheta_{mm}}f_{mm}(\vartheta_{mm, t}, \theta_{m_1, t}, \theta_{m_2, t})\\
    \vartheta_{m_k, t+1} &= \vartheta_{m_k, t} - \eta_t \lambda \sigma_{m_k}\nabla_{\vartheta_{m_k}}f_{m_k}(\vartheta_{m_k, t}, \theta_{m_k, t})\\
    \theta_{m_k, t+1} &= \theta_{m_k, t} - \eta_t \nabla_{\theta_{m_k}}f_{mm}(\vartheta_{mm, t}, \theta_{m_1, t}, \theta_{m_2, t}) \\&-\eta_t \lambda \sigma_{m_k} \nabla_{\theta_{m_k}}f_{m_k}(\vartheta_{m_k, t}, \theta_{m_k, t})
\end{align*}
where {\small $\sigma_{m_k}\!\!:=\!\!\exp(h_{\mu, k})/{\sum_{k'=1}^2\exp (h_{\mu, k'})}$}, $h_{\mu, k}=\mu^{-1}(f_{m_k}(\vartheta_{m_k}, \theta_{m_k}) - f^*_{m_k})$, and $k\in\{1, 2\}$. The MIMO algorithm is summarized in Algorithm \ref{alg:mml-via-moo}. Importantly, the careful construction of a single objective for balanced multi-modal sensor learning allows us to use gradient descent directly on $\hat{f}_{mm, \mu}$, without computationally intensive heuristic subroutines that lack provable convergence guarantees. Next, we make the following assumptions to establish convergence of Algorithm \ref{alg:mml-via-moo}.

\begin{table*}[t]
{\small
\begin{center}
    \setlength{\tabcolsep}{0.15em} % for the horizontal padding
\caption{Multi-modal and uni-modal sensor test accuracy performance (Acc, Acc$_a$, Acc$_v$, Acc$_t$ ) of different MSL and MOO methods on the CREMA-D and UR-Funny datasets. $t(s)$ denotes the average subroutine time for each method. The best (highest) accuracy results are shown in \textbf{bold}. The best (lowest) subroutine time among the first three best-performing methods (in Acc) is \underline{underlined}. All error values denote one standard deviation.}\label{tab:cremad-urfunny}
\begin{tabular}{lccccccccc}
\hline
\multirow{2}{*}{Method} & \multicolumn{4}{c}{CREMA-D} & \multicolumn{5}{c}{UR-Funny} \\
\cline{2-10}
& Acc {\tiny(\%)} & Acc$_a$ {\tiny(\%)}& Acc$_v$ {\tiny(\%)} & $t(s)$ & Acc {\tiny(\%)} & Acc$_a$ {\tiny(\%)} & Acc$_v$ {\tiny(\%)} & Acc$_t$ & $t(s)$ \\
\hline
\textbf{Audio}    &  - & 59.31{\tiny $\pm$ 0.76} & - & 0.028{\tiny $\pm$ 0.005}
&  - & \textbf{58.00}{\tiny $\pm$ 0.74} & - & - & 0.037{\tiny $\pm$ 0.002}\\
\textbf{Visual}    &  - & - & \textbf{61.04}{\tiny $\pm$ 0.87} & 0.029{\tiny $\pm$ 0.005}
&- & - & \textbf{53.12}{\tiny $\pm$ 0.52} & - &  0.038{\tiny $\pm$ 0.002}  \\
\textbf{Text}  &  - & - & - & - 
&  - & - & - & \textbf{63.74}{\tiny $\pm$ 1.59} & 0.038{\tiny $\pm$ 0.002}\\
\textbf{ MSL}  &  60.26{\tiny $\pm$ 0.84} & 58.02{\tiny $\pm$ 0.58} & 22.69{\tiny $\pm$ 1.72}  & 0.038{\tiny $\pm$ 0.009}
&    63.10{\tiny $\pm$ 0.59} & 53.02{\tiny $\pm$1.16} & 50.18{\tiny $\pm$ 0.78} & 62.49{\tiny $\pm$ 0.90} & 0.038{\tiny $\pm$ 0.002}\\
\hline
\textbf{MSES}   &  57.96{\tiny $\pm$ 0.42} & 55.84{\tiny $\pm$ 0.91} & 27.37{\tiny $\pm$ 1.12}  & 0.042{\tiny $\pm$ 0.007}
&  62.90{\tiny $\pm$ 0.68} & 53.33{\tiny $\pm$1.17} & 49.91{\tiny $\pm$ 1.46} & 62.71{\tiny $\pm$ 0.60} & 0.071{\tiny $\pm$ 0.006}  \\
\textbf{MSLR}   &  62.09{\tiny $\pm$ 0.15} & 58.35{\tiny $\pm$ 0.62} & 25.62{\tiny $\pm$ 1.43}  & 0.0412{\tiny $\pm$ 0.006}
&  63.16{\tiny $\pm$ 0.45} & 54.77{\tiny $\pm$1.51} & 50.69{\tiny $\pm$ 0.18} & 61.89{\tiny $\pm$ 0.99} & 0.074{\tiny $\pm$ 0.012} \\
\textbf{OGM-GE}  &   74.49{\tiny $\pm$ 0.65} & 53.78{\tiny $\pm$ 1.21} & 47.82{\tiny $\pm$ 1.51}  & 0.112{\tiny $\pm$ 0.009}
&    - & - & - & - & -  \\
\textbf{AGM}   &  46.63{\tiny $\pm$ 0.93} & 43.05{\tiny $\pm$ 0.99} & 18.32{\tiny $\pm$ 1.11}  & 0.205{\tiny $\pm$ 0.005}  
&  64.18{\tiny $\pm$ 0.77} & 54.76{\tiny $\pm$0.65} & 49.45{\tiny $\pm$ 0.90} & 62.74{\tiny $\pm$ 0.76} & 0.384{\tiny $\pm$ 0.001}  \\
\textbf{EW}     &  65.50{\tiny $\pm$ 0.50} & 58.80{\tiny $\pm$ 0.77} & 59.66{\tiny $\pm$ 1.56}  & 0.036{\tiny $\pm$ 0.006}
&  63.63{\tiny $\pm$ 0.42} & 54.00{\tiny $\pm$ 0.94} & 49.69{\tiny $\pm$ 0.95}  & 62.41{\tiny $\pm$ 0.55}  & 0.090{\tiny $\pm$ 0.002} \\
\textbf{MGDA}     & 63.47{\tiny $\pm$ 0.79} & \textbf{60.80}{\tiny $\pm$ 0.68} & 26.25{\tiny $\pm$ 1.19}  & 0.310{\tiny $\pm$ 0.053}
&  63.81{\tiny $\pm$ 0.53} & 53.85{\tiny $\pm$ 1.37} & 49.81{\tiny $\pm$ 0.80}  & 62.96{\tiny $\pm$1.00}  & 0.441{\tiny $\pm$ 0.003} \\
\textbf{MMPareto}     &  68.67{\tiny $\pm$ 0.97} & 60.59{\tiny $\pm$ 0.57} & 58.82{\tiny $\pm$ 1.61}  & 0.309{\tiny $\pm$ 0.053}
&  63.94{\tiny $\pm$ 0.53} & 52.48{\tiny $\pm$ 1.57} & 50.31{\tiny $\pm$ 0.76}  & 63.35{\tiny $\pm$ 1.27}  & 0.436{\tiny $\pm$ 0.006} \\
\textbf{MIMO}     &  \textbf{75.96}{\tiny $\pm$ 0.83} & 55.60{\tiny $\pm$ 1.54} & 59.76{\tiny $\pm$ 1.40}  & \underline{0.037}{\tiny $\pm$ 0.012} 
&  \textbf{64.54}{\tiny $\pm$ 0.86} & 52.19{\tiny $\pm$ 1.08} & 50.38{\tiny $\pm$ 0.48}  & 62.10 {\tiny $\pm$ 1.31}  & \underline{0.101}{\tiny $\pm$ 0.013} \\
\bottomrule
\end{tabular}
\end{center}}
\vspace{-0.5cm}
\end{table*}

\begin{Assumption}[Lipschitz continuity of objectives]\label{ass:lip}
    For all $k\in[K]$, the objectives $f_{m_k}$ are $L_{m_k, 1}$-Lipschitz continuous, i.e., for any $\boldsymbol{\Theta_{m_k}}, \boldsymbol{\Theta_{m_k}'}$,
    \begin{equation}
        |f_{m_k}(\boldsymbol{\Theta_{m_k}}) - f_{m_k}(\boldsymbol{\Theta_{m_k}'})| \!\leq\! L_{m_k, 1} \Vert \boldsymbol{\Theta_{m_k}} - \boldsymbol{\Theta_{m_k}'} \Vert.
    \end{equation}
\end{Assumption}
Assumption \ref{ass:lip} is required in our analysis to establish the smoothness of the composite objective $\hat{f}_{mm, \mu}$. We can then state the following proposition. 

\begin{Proposition}[Smoothness of $\hat{f}_{mm, \mu}$]\label{prop:f-hat-smooth}
    Under Assumptions \ref{ass:smooth} and \ref{ass:lip}, there exist $\hat{L}_{mm}>0$ such that $\hat{f}_{mm, \mu}$ defined in \eqref{eq:penalty-smooth-cheb-two} is $\hat{L}_{mm}$-smooth (Definition \ref{def:smooth}), where $\hat{L}_{mm} := L_{mm} + \lambda \sum_{k=1}^2 \left(L_{m_k} + \mu^{-1}L^2_{m_k, 1} \right)$.
\end{Proposition}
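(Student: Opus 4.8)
The plan is to exploit the additive structure $\hat{f}_{mm} = f_{mm} + \lambda g_\mu$, where $g_\mu(\boldsymbol{\hat{\Theta}_{mm}}) := \mu\log\big(\sum_{k=1}^2 \exp(\mu^{-1}(f_{m_k}(\vartheta_{m_k},\theta_{m_k}) - f^*_{m_k}))\big)$ is the smoothed penalty. Since the Lipschitz constant of a sum of gradients is at most the sum of the individual Lipschitz constants, I would first reduce the claim to two independent estimates: the term $f_{mm}$ is $L_{mm}$-smooth directly by Assumption \ref{ass:smooth}, so it remains to show that $g_\mu$ is $\sum_{k=1}^2 (L_{m_k} + \mu^{-1} L_{m_k,1}^2)$-smooth, after which scaling by $\lambda$ and adding yields the stated $\hat{L}_{mm}$. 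A preliminary observation I would record is that each $f_{m_k}$ depends only on the sub-block $(\vartheta_{m_k},\theta_{m_k})$ of $\boldsymbol{\hat{\Theta}_{mm}}$; viewing it as a function of the full vector (constant in the remaining coordinates) preserves both its smoothness and its Lipschitz constant, since the relevant norm differences only shrink when restricted to the sub-block.

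The core of the argument is the smoothness of the log-sum-exp composition $g_\mu$. Writing $\sigma_k := \exp(\mu^{-1}(f_{m_k}-f^*_{m_k}))/\sum_{k'}\exp(\mu^{-1}(f_{m_{k'}}-f^*_{m_{k'}}))$ for the induced softmax weights, a direct differentiation gives $\nabla g_\mu = \sum_{k} \sigma_k \nabla f_{m_k}$, and the Hessian decomposes as
\[
\nabla^2 g_\mu = \sum_{k} \sigma_k \nabla^2 f_{m_k} + \mu^{-1}\Big(\sum_k \sigma_k \nabla f_{m_k}\nabla f_{m_k}^\top - \bar{g}\bar{g}^\top\Big),
\]
where $\bar{g} = \sum_k \sigma_k \nabla f_{m_k}$. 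I would bound the operator norm of each block separately. For the first block, $\big\|\sum_k \sigma_k \nabla^2 f_{m_k}\big\| \le \sum_k \sigma_k L_{m_k} \le \sum_k L_{m_k}$, using $\|\nabla^2 f_{m_k}\|\le L_{m_k}$ (Assumption \ref{ass:smooth}) together with $0\le\sigma_k\le 1$. The second block is $\mu^{-1}$ times a $\sigma$-weighted covariance matrix of the gradients $\{\nabla f_{m_k}\}$; being a covariance it is PSD, and for any unit vector $v$ its quadratic form equals the $\sigma$-variance of $v^\top \nabla f_{m_k}$, hence is dominated by the second moment $\sum_k \sigma_k (v^\top \nabla f_{m_k})^2 \le \sum_k \sigma_k \|\nabla f_{m_k}\|^2 \le \sum_k L_{m_k,1}^2$, where the final step uses $\|\nabla f_{m_k}\|\le L_{m_k,1}$ from Assumption \ref{ass:lip}. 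Combining, $\|\nabla^2 g_\mu\| \le \sum_k (L_{m_k} + \mu^{-1}L_{m_k,1}^2)$, the desired smoothness constant of $g_\mu$.

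The main obstacle I anticipate is the second (covariance) block, since this is exactly where the $\mu^{-1}$ blow-up and the Lipschitz-continuity assumption enter; the clean way to control it is the variance-versus-second-moment inequality above, which discards the cross terms and relies only on $\|\nabla f_{m_k}\| \le L_{m_k,1}$ rather than on any bound on gradient \emph{differences}. A secondary technical point is that the Hessian computation presupposes the $f_{m_k}$ are twice differentiable, whereas Assumptions \ref{ass:smooth}--\ref{ass:lip} posit only Lipschitz gradients. To be fully rigorous I would either (i) recast the bound as a purely first-order estimate, splitting $\nabla g_\mu(\boldsymbol{\hat{\Theta}}_1) - \nabla g_\mu(\boldsymbol{\hat{\Theta}}_2)$ into $\sum_k \sigma_k(\boldsymbol{\hat{\Theta}}_1)[\nabla f_{m_k}(\boldsymbol{\hat{\Theta}}_1) - \nabla f_{m_k}(\boldsymbol{\hat{\Theta}}_2)] + \sum_k[\sigma_k(\boldsymbol{\hat{\Theta}}_1)-\sigma_k(\boldsymbol{\hat{\Theta}}_2)]\nabla f_{m_k}(\boldsymbol{\hat{\Theta}}_2)$ and bounding the two pieces by $\sum_k L_{m_k}\|\boldsymbol{\hat{\Theta}}_1-\boldsymbol{\hat{\Theta}}_2\|$ and $\mu^{-1}\sum_k L_{m_k,1}^2\|\boldsymbol{\hat{\Theta}}_1-\boldsymbol{\hat{\Theta}}_2\|$ respectively, via Lipschitzness of the softmax map (whose Jacobian carries the $\mu^{-1}$ factor), or (ii) invoke the general smoothing-function result of \citep{lin2024smooth} recorded in Definition \ref{def:smooth} with $L = \sum_k L_{m_k}$ and $\alpha = \sum_k L_{m_k,1}^2$. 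Either route yields the same constant, and the final assembly $\hat{L}_{mm} = L_{mm} + \lambda\sum_{k=1}^2(L_{m_k} + \mu^{-1}L_{m_k,1}^2)$ follows by the triangle inequality.
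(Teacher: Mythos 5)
Your proposal follows essentially the same route as the paper's proof: both differentiate the log-sum-exp penalty to obtain the softmax-weighted gradient, decompose the Hessian into $\sum_k \sigma_k \nabla^2 f_{m_k}$ plus a $\mu^{-1}$-scaled weighted covariance of the gradients, bound the first block by $\sum_k L_{m_k}$ via Assumption \ref{ass:smooth}, and bound the covariance by the second moment via Cauchy--Schwarz together with $\|\nabla f_{m_k}\|\le L_{m_k,1}$ from Assumption \ref{ass:lip}, then add the $L_{mm}$ contribution of $f_{mm}$. Your remark that the Hessian computation presupposes twice differentiability (not guaranteed by the stated assumptions) is a point of care the paper's proof silently skips, and either of your proposed fixes would close that gap while yielding the same constant $\hat{L}_{mm}$.
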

A proof of Proposition \ref{prop:f-hat-smooth} is given in Appendix \ref{app:f-hat-smooth-proof}. Proposition \ref{prop:f-hat-smooth} establishes the smoothness of the objective used by Algorithm \ref{alg:mml-via-moo} for gradient descent. Standard gradient descent theory \citep{nesterov2018lectures} then yields the following result. 

\begin{Theorem}[Convergence]\label{thm:convergence}
Let Assumptions \ref{ass:smooth} and \ref{ass:lip} hold. For any $\lambda,\mu > 0$, and $0< \eta_t\leq 1/\hat{L}_{mm}$ for all $t \in [T]$, Algorithm \ref{alg:mml-via-moo} reaches an $\epsilon$-stationary point of $\hat{f}_{mm, \mu}$ with iteration complexity $\mathcal{O}(1/\epsilon)$. 
\end{Theorem}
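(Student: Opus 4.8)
The plan is to treat Theorem~\ref{thm:convergence} as a direct consequence of Proposition~\ref{prop:f-hat-smooth}: once $\hat{f}_{mm}$ is known to be $\hat{L}_{mm}$-smooth, the iteration \eqref{eq:mimo-update} is simply gradient descent on a smooth (and in general nonconvex) objective, and the $\mathcal{O}(1/\epsilon)$ rate follows from the classical descent-lemma argument. Because the composite smoothness constant, including the contribution of the smoothed $\max$ penalty and its dependence on $\mu$, has already been absorbed into $\hat{L}_{mm}$ by Proposition~\ref{prop:f-hat-smooth}, no further handling of the non-smooth worst-case term is needed at this stage.

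First I would invoke the descent lemma for $\hat{L}_{mm}$-smooth functions, which gives
\begin{align*}
\hat{f}_{mm}(\hat{\boldsymbol{\Theta}}_{mm,t+1}) &\leq \hat{f}_{mm}(\hat{\boldsymbol{\Theta}}_{mm,t}) + \langle \nabla \hat{f}_{mm}(\hat{\boldsymbol{\Theta}}_{mm,t}), \hat{\boldsymbol{\Theta}}_{mm,t+1} - \hat{\boldsymbol{\Theta}}_{mm,t}\rangle \\ &\quad + \frac{\hat{L}_{mm}}{2}\|\hat{\boldsymbol{\Theta}}_{mm,t+1} - \hat{\boldsymbol{\Theta}}_{mm,t}\|^2.
\end{align*}
Substituting the update \eqref{eq:mimo-update}, i.e. $\hat{\boldsymbol{\Theta}}_{mm,t+1} - \hat{\boldsymbol{\Theta}}_{mm,t} = -\eta_t \nabla \hat{f}_{mm}(\hat{\boldsymbol{\Theta}}_{mm,t})$, collapses the right-hand side to
\begin{equation*}
\hat{f}_{mm}(\hat{\boldsymbol{\Theta}}_{mm,t+1}) \leq \hat{f}_{mm}(\hat{\boldsymbol{\Theta}}_{mm,t}) - \eta_t\Big(1 - \frac{\hat{L}_{mm}\eta_t}{2}\Big)\|\nabla \hat{f}_{mm}(\hat{\boldsymbol{\Theta}}_{mm,t})\|^2.
\end{equation*}
The step-size condition $0<\eta_t\leq 1/\hat{L}_{mm}$ then forces $1 - \hat{L}_{mm}\eta_t/2 \geq 1/2$, so each iteration decreases the objective by at least $\tfrac{\eta_t}{2}\|\nabla \hat{f}_{mm}(\hat{\boldsymbol{\Theta}}_{mm,t})\|^2$.

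Next I would telescope this inequality over $t=1,\dots,T$. Summing and using that $\hat{f}_{mm}$ is bounded below by some $\hat{f}^*_{mm}$ yields $\sum_{t=1}^T \tfrac{\eta_t}{2}\|\nabla \hat{f}_{mm}(\hat{\boldsymbol{\Theta}}_{mm,t})\|^2 \leq \hat{f}_{mm}(\hat{\boldsymbol{\Theta}}_{mm,1}) - \hat{f}^*_{mm}$. Taking the constant choice $\eta_t = 1/\hat{L}_{mm}$ and dividing by $T$ gives $\min_{t\in[T]}\|\nabla \hat{f}_{mm}(\hat{\boldsymbol{\Theta}}_{mm,t})\|^2 \leq \tfrac{2\hat{L}_{mm}(\hat{f}_{mm}(\hat{\boldsymbol{\Theta}}_{mm,1}) - \hat{f}^*_{mm})}{T}$. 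Interpreting the $\epsilon$-stationarity criterion as $\min_{t}\|\nabla \hat{f}_{mm}\|^2\leq\epsilon$ (which is consistent with the claimed $\mathcal{O}(1/\epsilon)$ rate rather than the $\mathcal{O}(1/\epsilon^2)$ rate that the gradient-norm criterion would yield), it follows that $T = \mathcal{O}(1/\epsilon)$ iterations suffice.

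The main obstacle is essentially pushed into Proposition~\ref{prop:f-hat-smooth}: the only genuinely problem-specific ingredient, namely controlling the Lipschitz and smoothness contribution of the smoothed worst-case penalty so that a single finite $\hat{L}_{mm}$ exists, is supplied there. What remains to verify for this theorem is the mild point that $\hat{f}_{mm}$ is bounded below, which I would argue directly from the formulation \eqref{eq:penalty-smooth-cheb-two}: since $f_{m_k}(\vartheta_{m_k},\theta_{m_k}) - f^*_{m_k}\geq 0$ by definition of $f^*_{m_k}$, the log-sum-exp penalty is nonnegative, and $f_{mm}$ is bounded below as a loss, so $\hat{f}^*_{mm}>-\infty$ and the telescoped bound is finite.
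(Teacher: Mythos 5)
Your proposal is correct and is essentially the same argument the paper relies on: the paper gives no explicit proof of Theorem~\ref{thm:convergence}, instead invoking Proposition~\ref{prop:f-hat-smooth} and citing standard gradient descent theory \citep{nesterov2018lectures}, which is precisely the descent-lemma-plus-telescoping argument you spell out. Your two additional observations—that boundedness below of $\hat{f}_{mm}$ must be supplied (and follows from $f_{m_k}-f^*_{m_k}\geq 0$ together with $f_{mm}$ being a loss) and that $\epsilon$-stationarity must be read as $\min_t\|\nabla\hat{f}_{mm}\|^2\leq\epsilon$ for the stated $\mathcal{O}(1/\epsilon)$ rate—are both correct and consistent with the paper's conventions.
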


\section{Experiments} \label{sec:experiments} 
% \QX{As we can have 8 pages, shall we move some of experiments in Appendix to the main paper? }

In this section, we evaluate MIMO on several multi-modal benchmarks and compare it with popular multi-modal and MOO baselines. In addition to the performance comparison, we include experiments illustrating the generalization benefits of MIMO and an ablation study of its parameters. 

\textbf{Experiment settings.} We adopt the same experimental settings as \citep{li2023boosting} and \citep{peng2022balanced} across several popular multi-modal datasets. The \textbf{CREMA-D} dataset \citep{cao2014crema} is an audio-visual dataset for speech emotion recognition with six emotion labels. The \textbf{UR-Funny} dataset \citep{hasan2019ur} was created for humor detection and involves word (text), gesture (vision), and intonational cue (acoustic) modalities. \textbf{Kinetics-Sound} \citep{arandjelovic2017look} comprises 31 human action classes derived from the Kinetics dataset \citep{kay2017kinetics}, which contains 400 categories of YouTube videos with both audio and visual components. \textbf{VGGSound} \citep{chen2020vggsound} is a large-scale video dataset with 309 classes spanning a broad range of everyday audio events. Additional experimental details and results for \textbf{AV-MNIST}, \textbf{AVE}, and \textbf{CMU-MOSEI} are provided in Appendix \ref{app:experiments}. We compare MIMO with several popular multi-modal baselines, including Modality-Specific Early Stopping (\textbf{MSES}) \citep{fujimori2020modality}, Modality-Specific Learning Rate (\textbf{MSLR}) \citep{yao2022modality}, On-the-fly Gradient Modulation Generalization Enhancement (\textbf{OGM-GE}) \citep{peng2022balanced} (designed only for the two-modality case), and Adaptive Gradient Modulation (\textbf{AGM}) \citep{li2023boosting}. In addition to these baselines, we compare against MOO baselines such as equal weighting (\textbf{EW}), the Multiple Gradient Descent Algorithm (\textbf{MGDA}) \citep{Desideri2012mgda}, and \textbf{MMPareto} \citep{wei2024innocent}, which solve the multi-modal sensor learning problem as an MOO problem.

\textbf{Balanced multi-modal sensor learning for better generalization.} We first provide qualitative evidence that MIMO improves generalization in a real-world multi-modal sensor-fusion task using the CREMA-D dataset. Figure \ref{fig:cremad-illustration} (Left) shows the learning behavior of vanilla MSL and MIMO. While MIMO learns the slow-to-learn visual modality, vanilla MSL overfits the audio modality, resulting in poor test performance. 
Furthermore, in Figure \ref{fig:cremad-illustration} (Middle and Right), we investigate the loss landscape around the models trained using vanilla MSL and the proposed MIMO method in a reduced-dimensional space \citep{li2018visualizing}. Vanilla MSL achieves lower training loss (solid black contours) than MIMO, but exhibits poorer test loss (dashed yellow contours). In addition, the training-accuracy disparity between the audio and visual modalities (blue-green shading) is much larger in favor of the audio modality. In contrast, MIMO yields a more balanced training accuracy between the audio and visual modalities and a better test-loss profile than vanilla MSL, at the expense of slightly worse training loss. 
This suggests that balanced multi-modal sensor learning prevents the model from overfitting to a specific modality, thereby improving generalization.

\textbf{Comparison with baselines.} Next, we demonstrate the performance gains of MIMO on real-world multi-modal benchmarks relative to existing balanced multi-modal sensor learning methods and MOO methods. Table \ref{tab:cremad-urfunny} reports the performance of MIMO and several fusion and MOO baselines on the CREMA-D and UR-Funny classification benchmarks. On CREMA-D, MIMO achieves the best test accuracy among the baselines. Comparing against the individual-modality results also shows that MIMO attains superior performance through balanced multi-modal sensor learning, whereas vanilla MSL fails to match the best-performing individual modality (the visual modality). Furthermore, naively applying MOO methods to the fused and uni-modal sensor objectives (e.g., EW and MGDA) does not improve fused performance, because such approaches provide no fine-grained control over the trade-off between uni-modal sensor and fused-objective optimization. For example, MGDA is heavily biased toward the audio modality, leading to poor fused accuracy.

\begin{table}
\setlength{\tabcolsep}{0.1em}
\caption{Comparison using VGGSound dataset.}
\label{tab:vggsound}
{\small
\begin{tabular}{lcccc}
\hline
& Acc {\tiny(\%)} & Acc$_a$ {\tiny(\%)}& Acc$_v$ {\tiny(\%)} & $t(s)$\\
\hline
\textbf{ MSL}  & $60.8${\tiny $\pm 0.13$} & $42.83${\tiny $\pm 1.04$} & $15.43${\tiny $\pm 1.18$}  & $0.011${\tiny $\pm 0.001$}\\
\textbf{OGM-GE}  & $62.13${\tiny $\pm 1.31$} & $32.50${\tiny $\pm 2.26$} & $22.00${\tiny $\pm0.01$}  & $0.121${\tiny $\pm 0.007$}\\
\textbf{EW}     & $63.90${\tiny $\pm 1.58$} & $\bm{48.77}${\tiny $\pm 2.01$} & $25.20${\tiny $\pm 1.40$}  & $\underline{0.006}${\tiny $\pm 0.001$}\\
\textbf{MMPareto}&$66.07${\tiny $\pm 1.04$} & $48.07${\tiny $\pm 1.37$} & $28.87${\tiny $\pm 1.44$}  & $0.389${\tiny $\pm 0.053$}\\
\textbf{MIMO}     &  $\bm{69.10}${\tiny $\pm 1.13$} & $41.47${\tiny $\pm 1.04$} & $\bm{38.20}${\tiny $\pm 1.01$} & $0.019${\tiny $\pm 0.004$}\\
\bottomrule
\end{tabular}}
\end{table}

\begin{table}[t]
\caption{Comparison using Kinetics-Sound dataset.}
\label{tab:Kinetics-Sound}
{\small
\setlength{\tabcolsep}{0.15em} % for the horizontal padding
\begin{tabular}{lcccc}
\hline
& Acc {\tiny(\%)} & Acc$_a$ {\tiny(\%)}& Acc$_v$ {\tiny(\%)} & $t(s)$\\
\hline
\textbf{ MSL}  & $59.83${\tiny $\pm 1.78$} & $41.2${\tiny $\pm 6.34$} & $18.67${\tiny $\pm 1.56$}  & $0.023${\tiny $\pm 0.007$}\\
\textbf{OGM-GE}  & $63.73${\tiny $\pm 1.37$} & $44.10${\tiny $\pm 0.01$} & $22.57${\tiny $\pm2.56$}  & $0.247${\tiny $\pm 0.119$}\\
\textbf{EW}     & $60.73${\tiny $\pm 1.77$} & $45.13${\tiny $\pm 2.56$} & $33.3${\tiny $\pm 2.34$}  & $0.026${\tiny $\pm 0.010$}\\
\textbf{MMPareto}&$68.60${\tiny $\pm 1.41$} & $\bm{48.07}${\tiny $\pm 1.37$} & $37.23${\tiny $\pm 1.37$}  & $0.689${\tiny $\pm 0.098$}\\
\textbf{MIMO}     &  $\bm{69.60}${\tiny $\pm 1.41$} & $45.07${\tiny $\pm 1.04$} & $\bm{43.13}${\tiny $\pm 1.78$} & \underline{0.039}{\tiny $\pm$ 0.015}\\
\bottomrule
\end{tabular}}
\vspace{-0.2cm}
\end{table}
 
We then compare MIMO against MSL and MOO baselines on the three-modality UR-Funny benchmark and observe that MIMO performs comparably to or better than the baselines, while maintaining a subroutine time close to that of vanilla MSL; this is consistent with the observations on CREMA-D. Next, we evaluate performance on the Kinetics-Sound benchmark. The results in Table \ref{tab:Kinetics-Sound} show that MIMO outperforms both MOO baselines and balanced multi-modal sensor learning baselines. We attribute this superior performance to more balanced learning across modalities, as evidenced by the smaller disparity in uni-modal sensor accuracies for audio and visual modalities (only $\sim 3\%$ for MIMO, whereas the next-smallest disparity, achieved by MMPareto, is $\sim 11\%$). Similar observations can be made from the VGGSound results in Table \ref{tab:vggsound}.

\begin{figure}[t]
    \begin{center}
        \includegraphics[width=0.9\linewidth]{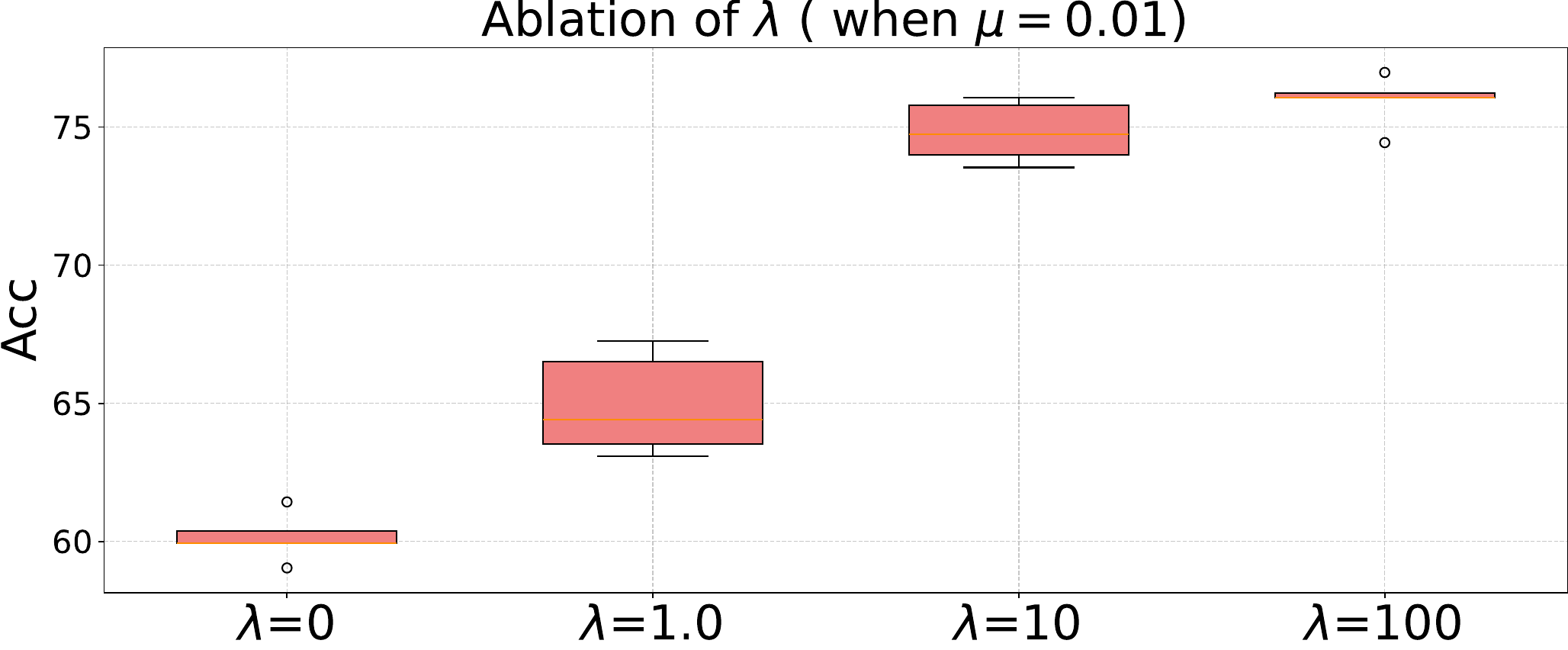}\\
        \vspace{0.2cm}
        \includegraphics[width=0.9\linewidth]{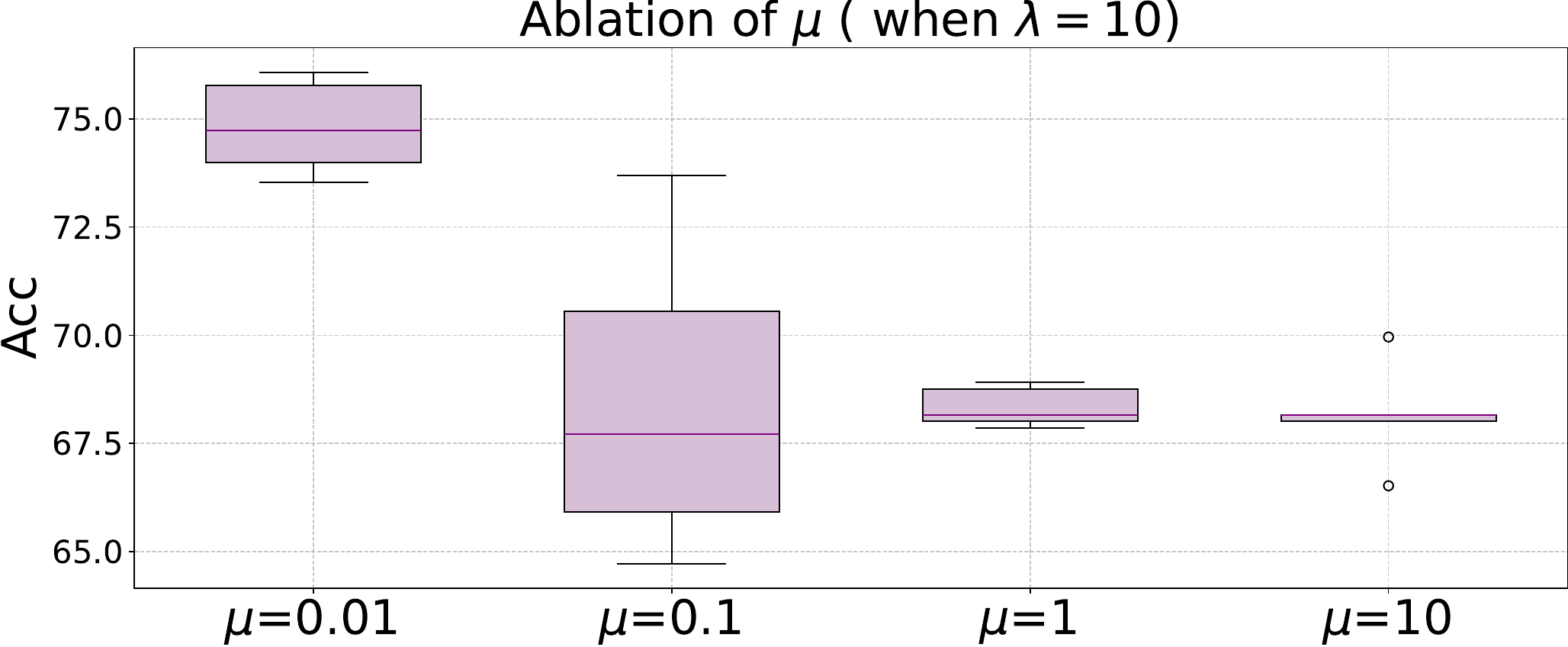}
        \centering
        \caption{Ablation of hyperparameters.}
        \label{fig:param-ablation}
    \end{center}
\end{figure}
 
 Furthermore, in most of the experiments above, MIMO has the fastest subroutine time (underlined values in the $t(s)$ column) among the top three best-performing methods, and its runtime is on the same order as vanilla MSL. In the VGGSound experiments, MIMO achieves a speed-up of up to $\sim 20\times$ relative to the next-best-performing baseline (MMPareto). While MIMO consistently outperforms the baselines across datasets, simple methods such as vanilla MSL can still work reasonably well on some datasets, such as UR-FUNNY. This may be due to the nature of the dataset, which does not satisfy the conditions for imbalance in MSL \citep{lu2024theory} and therefore does not exhibit severe modality imbalance. However, since MIMO does not incur significant computational overhead, applying it in such cases is reasonable.

 \textbf{Ablation of MIMO parameters.} In Figure \ref{fig:param-ablation}, we provide an ablation study of the choice of $\lambda$ and $\mu$ for MIMO on the CREMA-D dataset. We observe that, for very small $\lambda$ ($= 1.0$), performance is worse than for larger $\lambda$ values ($\geq 10$), which is expected because the effect of the constraint is weaker when $\lambda$ is small. We also observe that increasing $\lambda$ can improve performance, although the gains become marginal for larger $\lambda$ values ($\geq 10$). For $\mu$, smaller values lead to good performance, whereas larger values significantly degrade performance. This is also expected: when $\mu$ is large, the smoothed max deviates too much from the max function, preventing MIMO from prioritizing the worst-performing uni-modal sensor objective.

\section{Conclusions and Future Work}\label{sec:conclusion}

In this paper, we proposed a new formulation for balanced multi-modal sensor learning that prioritizes the worst-performing uni-modal sensor objective while optimizing the fused multi-modal sensor objective. This perspective improves modality balance and supports robust multi-modal sensor representations relevant to learning-enabled control pipelines. We then proposed MIMO, a simple gradient-based algorithm with convergence guarantees and without computationally expensive inner subroutines. Empirical evaluations show that MIMO outperforms existing balanced multi-modal sensor learning methods while substantially reducing computation time. This work focuses on late-fusion architectures with concatenation/sum fusion; extending MIMO-style MOO formulations to early/hybrid fusion and to explicit closed-loop control criteria is a natural next step.

\bibliography{myabrv,mtl, moo, mml, bo, PL}
\bibliographystyle{IEEEtran}

\clearpage
\onecolumn
\appendices

\begin{center}
{\large \bf Supplementary Material for
``Balancing Multi-modal Sensor Learning via Multi-objective Optimization"}
\end{center}

\section{Notations}

A summary of notations used in this work is listed in Table~\ref{tab:notations} for ease of reference.
\begin{table}[ht]
\caption{Notations and their descriptions.}
  \label{tab:notations}
  \footnotesize
  \centering
  \begin{tabular}{l|l l }
  \toprule
  \textbf{Notation} & \textbf{Description}   \\
  % & Symbol  & Domain \\
  \midrule
  % \multicolumn{2}{c}{Basic}\\
  % \midrule
  $K$ & Number of modalities considered. We use K=2 in most parts of the paper for conciseness \\
  $N$ & Number of datapoints in the multi-modal sensor dataset $\mathcal{D}_{mm}$ \\
  $k$ & Index used to denote modality, $k\in[K]$ \\
  $i$ & Index used to denote datapoint, $i\in[N]$ \\
  \midrule
  \multicolumn{2}{c}{Dataset}\\
  \midrule
  $x_i^{(m_k)}$ & Input corresponding to modality $m_k$ for the datapoint index $i$ \\
  $y_i$ & Target output for the datapoint index $i$ \\
  $\hat{y}_i$ & Multi-modal output of the vanilla MSL/MIMO model for the datapoint index $i$ \\
  $\hat{y}_i^{(m_k)}$ & Uni-modal output of the MIMO model for the modality $m_k$ for the datapoint index $i$ \\
  \midrule
  \multicolumn{2}{c}{Model}\\
  \midrule
  $\vartheta_{mm}$ & Multi-modal head in the vanilla MSL/MIMO model (see Figure \ref{fig:overview}) \\
  $\vartheta_{m_k}$ & Uni-modal head for the modality $m_k$ in the MIMO model (see Figure \ref{fig:overview}) \\
  $\theta_{m_k}$ & Uni-modal encoder for the modality $m_k$ in the vanilla MSL/MIMO model (see Figure \ref{fig:overview}) \\
  $\boldsymbol{\Theta_{mm}}$ & \makecell[l]{Concatenation of all the components in the vanilla MSL model,\\ i.e. $\boldsymbol{\Theta_{mm}}:= [\vartheta_{mm}; \theta_{m_1}; \theta_{m_2}; \dots ; \theta_{m_K}]$}\\
  $\boldsymbol{\hat{\Theta}_{mm}}$ & \makecell[l]{Concatenation of the MIMO model parameters,\\ i.e. $\boldsymbol{\hat{\Theta}_{mm}}:= [\vartheta_{mm}; \vartheta_{m_1}; \vartheta_{m_2}; \dots ; \vartheta_{m_K}; \theta_{m_1}; \theta_{m_2}; \dots ; \theta_{m_K}]$}\\
  $\boldsymbol{\Theta_{m_k}}$ & \makecell[l]{Concatenation of all the $m_k$ modality specific components in the MIMO model,\\ i.e. $\boldsymbol{\Theta_{m_k}}:= [\vartheta_{m_k}; \theta_{m_k}]$}\\
  \midrule
  \multicolumn{2}{c}{Objectives}\\
  \midrule
  $f_{mm}(\boldsymbol{\Theta_{mm}})$ & Vanilla multi-modal sensor fusion objective\\
  $f_{m_k}(\boldsymbol{\Theta_{m_k}})$ & Uni-modal objective for modality $m_k$ induced by uni-modal sensor head $\vartheta_{m_k}$ in MIMO model\\
  $\hat{f}_{mm}(\boldsymbol{\hat{\Theta}_{mm}})$ & MIMO objective, which is a combination of $f_{mm}$ and $f_{m_k}$ for all $k\in[K]$ (defined in \eqref{eq:penalty-smooth-cheb-two} for K=2)\\
  \midrule
  \multicolumn{2}{c}{Toy Illustration}\\
  \midrule
  $\vartheta_{mm, m_k}$ & Parameter matrix for the partition of $\vartheta_{mm}$ that corresponds to the modality $m_k$ \\
  $C_{m_k}$ &  Empirical input correlation matrix for modality $m_k$ (defined in Section \ref{app:toy-illustration})\\
  $C_{m_km_{3-k}}$ & Empirical cross-correlation matrix between modality  $m_k$ and modality $m_{3-k}$ (defined in Section \ref{app:toy-illustration})\\
  $C_{ym_k}$ & Empirical input-output correlation matrix for modality $m_k$ (defined in Section \ref{app:toy-illustration})\\
  \bottomrule
  \end{tabular}
  \vspace{0.2cm}
\end{table}

Let the empirical input and input-output correlation matrices for modality $m_1$  be $C_{m_1}$ and $C_{ym_1}$ (similarly for modality $m_2$). Also let the cross-correlation matrices between $m_1$ and $m_2$ be $C_{m_1m_2}$ and $C_{m_2m_1}$

\section{Details of Toy Example} \label{app:toy-illustration}

In this section, we provide the implementation details of the toy experiment used to generate the learning curves given in Figure \ref{fig:overview}. This experiment is motivated by a similar illustration given in \citep{zhang2023theory}.

\paragraph{Datset.} To generate multi-modal data $\mathcal{D}_{mm} := \{ x^{(m_1)}_i, x^{(m_2)}_i , y_i\}_{i=1}^N$, we sample each element of $x^{(m_1)}_i$ from $\mathcal{N}(0, 25)$ and each element of $x^{(m_2)}_i$ from $\mathcal{N}(0, 0.25)$, where $x^{(m_1)}_i,x^{(m_2)}_i \in \mathbb{R}^{50}$, $\mathcal{N}(\mu, \sigma^2)$ are Gaussian distributions with mean $\mu$ and variance $\sigma^2$. We set the number of data points $N=700$. The label for each datapoint is generated as $y_i = 0.001x^{(m_1)}_i + x^{(m_2)}_i$ . The dataset is generated in this way so that it satisfies the condition for superficial modality preference given by
\begin{equation}
    \Vert C_{ym_1} \Vert > \Vert C_{ym_2} \Vert \quad \text{and} \quad C_{ym_1}C_{m_1}^{-1}C_{ym_1}^\top < C_{ym_2}C_{m_2}^{-1}C_{ym_2}^\top,
\end{equation}
where $C_{m_k}:=\frac{1}{N}\Sigma_{i=1}^N x^{(m_k)}_i (x^{(m_k)}_i)^\top$, $C_{m_km_{3-k}}:=\frac{1}{N}\Sigma_{i=1}^N x^{(m_k)}_i (x^{(m_{3-k})}_i)^\top$, and $C_{ym_k}:=\frac{1}{N}\Sigma_{i=1}^N y_i (x^{(m_k)}_i)^\top$.
The derivation of the condition (16) for superficial modality preference follow the derivation steps given in \citep{zhang2023theory} Appendix F.
\paragraph{Models.}For the vanilla multi-modal sensor fusion model (as shown in Figure \ref{fig:overview} (a)) we use a linear layer of input size $50$ and output size $100$ as the modality encoder for each modality $m_1$ and $m_2$, and then use linear layers of input size $100$ and output size $1$ for the modality-specific fusion head part for each modality $m_1$ and $m_2$. Finally, the multi-modal output is obtained by summing the output of each modality-specific part. For the MIMO model, we use the same architecture for the multi-modal part of the model, and use two additional linear layers, each with input size $100$ and output size $1$ to generate uni-modal sensor outputs (as shown in Figure \ref{fig:overview} (b)).

\paragraph{Optimization.} We use a learning rate of $0.01$ for both vanilla MSL and MIMO methods. For the MIMO method, we set $\lambda=10$ and $\mu=0.2$. The expressions of the gradients used in vanilla MSL and MIMO are summarized in Table \ref{tab:gradients}. The derivation of gradients follow a similar approach to that given in \citep{zhang2023theory} Appendix A.

\paragraph{Superficial modality preference.} In Figure \ref{fig:overview} (a), we can see that vanilla MSL is quick to learn modality $m_1$. However, it does not contribute to minimizing the multi-modal sensor fusion objective compared to modality $m_2$. This phenomenon is known as ``superficial modality preference'' \citep{zhang2023theory}, and occurs due to the properties of the dataset. More concretely, let the time taken to reach $\mathcal{M}_{m_1}$ and $\mathcal{M}_{m_2}$ are $t_{m_1}$ and $t_{m_2}$, respectively. Furthermore, let the objective value at the manifolds $\mathcal{M}_{m_1}$ and $\mathcal{M}_{m_2}$ be $f_{mm}\left(\mathcal{M}_{m_1}\right)$ and $f_{mm}\left(\mathcal{M}_{m_2}\right)$, respectively. Then we say model has ``superficial modality preference'' \citep{zhang2023theory} if the following condition holds
\begin{equation}
    t_{m_1} < t_{m_2} \quad \text{and} \quad f_{mm}\left(\mathcal{M}_{m_1}\right) > f_{mm}\left(\mathcal{M}_{m_2}\right).
\end{equation}
It can be shown that if the dataset statistics satisfy the following condition, applying SGD on $f_{mm}$ with model $\boldsymbol{\Theta}$ will result in superficial modality preference:
\begin{equation}
    \Vert C_{ym_1} \Vert > \Vert C_{ym_2} \Vert \quad \text{and} \quad C_{ym_1}C_{m_1}^{-1}C_{ym_1}^\top < C_{ym_2}C_{m_2}^{-1}C_{ym_2}^\top.
\end{equation}
Note that the condition depends only on the statistics of each modality data. Thus, applying SGD on multi-modal sensor fusion objective $f_{mm}$ parameterized by a late fusion multi-modal $\boldsymbol{\Theta}$ can result in the model giving priority to one modality, which may not be contributing most in minimizing the objective. The data set used in this toy example is generated in such a way that the above conditions for superficial modality preference are met.

\begin{table}[t]
\centering
{
    \centering
        \caption{Gradient of the objective for each layer of the network for vanilla MSL (column ${f}_{mm}({\boldsymbol{\Theta}})$) and MIMO (column ${f}_{mm}({\boldsymbol{\Theta}})$ + column $\Delta$). Column $\Delta$ contain the additional gradient components for $\hat{f}_{mm}(\boldsymbol{\hat{\Theta}_{mm}})$ (if any) compared to that of $f_{mm}(\boldsymbol{\Theta})$. Rows  $\nabla_{ \vartheta_{m_1}}$ and $\nabla_{ \vartheta_{m_2}}$ for column $f_{mm}(\boldsymbol{\Theta})$ is empty because only MIMO model $\boldsymbol{\hat{\Theta}_{mm}}$ contains $\vartheta_{m_1}$ and $\vartheta_{m_2}$. In column $\Delta$, $\lambda_i = \lambda$ if $i = \arg\!\max\limits_{{\scriptscriptstyle i\in\{1,2\}}} \left( f_{m_k}(\vartheta_{m_k}, \theta_{m_k}) - f^*_{m_k} \right)$, else $0$, where $\lambda$ is the penalty parameter in \eqref{eq:mimo}.}\label{tab:gradients}
    \begin{tabular}{|c|c|c|}
        \hline
        & $f_{mm}(\boldsymbol{\Theta})$ & $\Delta = \hat{f}_{mm}(\boldsymbol{\hat{\Theta}_{mm}}) - f_{mm}(\boldsymbol{\Theta})$ \\
        \hline\hline
        $\nabla_{ \theta_{m_1}}$ & ${\vartheta_{mm, m_1}^\top} \left(C_{ym_1} - \vartheta_{mm, m_1}\theta_{m_1}C_{m_1} - \vartheta_{mm, m_2}\theta_{m_2}C_{m_2m_1} \right)$  & $ \lambda_1 \vartheta^{\top}_{m_1} \left(C_{ym_1} - \vartheta_{m_1}\theta_{m_1}C_{m_1} \right)$\\
        $\nabla_{ \theta_{m_2}}$ & ${\vartheta_{mm, m_2}^\top} \left(C_{ym_2} - \vartheta_{mm, m_1}\theta_{m_1}C_{m_1m_2} - \vartheta_{mm, m_2}\theta_{m_2}C_{m_2} \right)$  & $ \lambda_2 {\vartheta^{\top}_{m_2}} \left(C_{ym_2} - \vartheta_{m_2}\theta_{m_2}C_{m_2} \right)$ \\
        $\nabla_{ \vartheta_{mm, m_1}}$ & $\left(C_{ym_1} - \vartheta_{mm, m_1}\theta_{m_1}C_{m_1} - \vartheta_{mm, m_2}\theta_{m_2}C_{m_2m_1} \right){\theta_{m_1}^\top}$  & $-$ \\
        $\nabla_{ \vartheta_{mm, m_2}}$ & $\left(C_{ym_2} - \vartheta_{mm, m_1}\theta_{m_1}C_{m_1m_2} - \vartheta_{mm, m_2}\theta_{m_2}C_{m_2} \right){\theta_{m_2}^\top}$   & $-$ \\
        $\nabla_{ \vartheta_{m_1}}$ & $-$  &  $\lambda_1\left(C_{ym_1} - \vartheta_{m_1}\theta_{m_1}C_{m_1} \right){\theta_{m_1}^\top}$\\
        $\nabla_{ \vartheta_{m_2}}$ &  $-$ & $\lambda_2\left(C_{ym_2} - \vartheta_{m_2}\theta_{m_2}C_{m_2}\right){\theta_{m_2}^\top}$\\
        \hline
    \end{tabular}
    }
    % \vspace{-4mm}
\end{table}

\section{Proof of Proposition \ref{prop:f-hat-smooth}.} \label{app:f-hat-smooth-proof}

In this section, we provide the proof for Proposition \ref{prop:f-hat-smooth}.
\begin{proof}
Consider any $\boldsymbol{\hat{\Theta}_{mm}} =  [\vartheta_{mm}; \vartheta_{m_1}; \vartheta_{m_2}; \theta_{m_1}; \theta_{m_2}]$, with $\boldsymbol{\Theta_{m_k}}=[\vartheta_{m_k}; \theta_{m_k}]$ for $k\in\{1, 2\}$. For brevity, we will omit the argument of the function/gradient in derivation; for example $\nabla_{\boldsymbol{\hat{\Theta}_{mm}}}f_{mm}(\boldsymbol{\hat{\Theta}_{mm}})$ will be denoted as $\nabla_{\boldsymbol{\hat{\Theta}_{mm}}}f_{mm}$. However, we will carefully consider the dependence of the function on the corresponding parameter, when we take gradients. Furthermore, we will denote the dimension of a vector parameter $v$ as $\text{dim}(v)$. Our goal in this proof is to show that $\nabla^2_{\boldsymbol{\hat{\Theta}_{mm}}}\hat{f}_{mm}\preceq \hat{L}_{mm}I_0$ for some $\hat{L}_{mm}>0$, where $\nabla^2_{\boldsymbol{\hat{\Theta}_{mm}}}\hat{f}_{mm}$ is the Hessian of $\hat{f}_{mm}$, and $I_0\in\R^{\text{dim}(\boldsymbol{\hat{\Theta}_{mm}})\times \text{dim}(\boldsymbol{\hat{\Theta}_{mm}})}$ is an identity matrix.
We first derive the gradient of $g_\mu := \mu\log\left(\sum_{k=1}^2\exp\left(\frac{f_{m_k} - f^*_{m_k}}{\mu} \right)\right)$ with repsect to $\boldsymbol{\hat{\Theta}_{mm}}$. We have
\begin{align}\label{eq:g-grad-0}
    \nabla_{\boldsymbol{\hat{\Theta}_{mm}}}g_\mu &= \frac{\mu}{\sum_{k=1}^2\exp\left(\frac{f_{m_k} - f^*_{m_k}}{\mu} \right)} \cdot \sum_{k=1}^2\frac{1}{\mu}\exp\left(\frac{f_{m_k} - f^*_{m_k}}{\mu} \right)\nabla_{\boldsymbol{\hat{\Theta}_{mm}}}f_{m_k} \nonumber\\
    &= \frac{1}{\sum_{k=1}^2\exp\left(\frac{f_{m_k} - f^*_{m_k}}{\mu} \right)}\sum_{k=1}^2\exp\left(\frac{f_{m_k} - f^*_{m_k}}{\mu} \right)\nabla_{\boldsymbol{\hat{\Theta}_{mm}}}f_{m_k}.
\end{align}
Then we can compute $\nabla^2_{\boldsymbol{\hat{\Theta}_{mm}}}g_\mu$ as
\begin{align}\label{eq:g-hessian-1}
    \nabla^2_{\boldsymbol{\hat{\Theta}_{mm}}}g_\mu &= \nabla_{\boldsymbol{\hat{\Theta}_{mm}}}\left(\frac{1}{\sum_{k=1}^2\exp\left(\frac{f_{m_k} - f^*_{m_k}}{\mu} \right)}\sum_{k=1}^2\exp\left(\frac{f_{m_k} - f^*_{m_k}}{\mu} \right)\nabla_{\boldsymbol{\hat{\Theta}_{mm}}}f_{m_k}\right) \nonumber\\
    &= \frac{\left( \sum_{k=1}^2\exp\left(\frac{f_{m_k} - f^*_{m_k}}{\mu} \right)\right) \nabla_{\boldsymbol{\hat{\Theta}_{mm}}}\left(\sum_{k=1}^2\exp\left(\frac{f_{m_k} - f^*_{m_k}}{\mu} \right)\nabla_{\boldsymbol{\hat{\Theta}_{mm}}}f_{m_k}\right)}{\left( \sum_{k=1}^2\exp\left(\frac{f_{m_k} - f^*_{m_k}}{\mu} \right)\right)^2} \nonumber\\
    &~~ - \frac{\left(\sum_{k=1}^2\exp\left(\frac{f_{m_k} - f^*_{m_k}}{\mu} \right)\nabla_{\boldsymbol{\hat{\Theta}_{mm}}}f_{m_k}\right)\nabla_{\boldsymbol{\hat{\Theta}_{mm}}}\left( \sum_{k=1}^2\exp\left(\frac{f_{m_k} - f^*_{m_k}}{\mu} \right)\right) }{\left( \sum_{k=1}^2\exp\left(\frac{f_{m_k} - f^*_{m_k}}{\mu} \right)\right)^2} \nonumber\\
    &= \sum_{k=1}^2\Psi\left( z_k \right) \left( \frac{1}{\mu}\nabla_{\boldsymbol{\hat{\Theta}_{mm}}}f_{m_k}\nabla_{\boldsymbol{\hat{\Theta}_{mm}}}f_{m_k}^\top + \nabla^2_{\boldsymbol{\hat{\Theta}_{mm}}}f_{m_k} \right) \nonumber\\&~~- \frac{1}{\mu}\left(\sum_{k=1}^2\Psi\left( z_k \right) \nabla_{\boldsymbol{\hat{\Theta}_{mm}}}f_{m_k} \right)\left(\sum_{k=1}^2\Psi\left( z_k \right) \nabla_{\boldsymbol{\hat{\Theta}_{mm}}}f_{m_k} \right)^\top,
\end{align}
where $z_k = \frac{f_{m_k} - f^*_{m_k}}{\mu}$, and $\Psi$ is the softmax operator given by $\Psi\left( z_i \right) = \frac{\exp(z_i)}{\sum_{k=1}^2 \exp(z_k)}$.
We then rewrite \eqref{eq:g-hessian-1} as
\begin{align}\label{eq:g-hessian-2}
    \nabla^2_{\boldsymbol{\hat{\Theta}_{mm}}}g_\mu =& \sum_{k=1}^2\Psi\left( z_k \right) \nabla^2_{\boldsymbol{\hat{\Theta}_{mm}}}f_{m_k} + \frac{1}{\mu}\Bigg[ \sum_{k=1}^2\Psi\left( z_k \right)\nabla_{\boldsymbol{\hat{\Theta}_{mm}}}f_{m_k}\nabla_{\boldsymbol{\hat{\Theta}_{mm}}}f_{m_k}^\top \nonumber\\
    &- \left(\sum_{k=1}^2\Psi\left( z_k \right) \nabla_{\boldsymbol{\hat{\Theta}_{mm}}}f_{m_k} \right)\left(\sum_{k=1}^2\Psi\left( z_k \right) \nabla_{\boldsymbol{\hat{\Theta}_{mm}}}f_{m_k} \right)^\top \Bigg].
\end{align}

Now, consider $\nabla^2_{\boldsymbol{\hat{\Theta}_{mm}}}f_{m_k}$ for $k\in\{1, 2\}$. Since $f_{m_k}$ is $L_{m_k}$-smooth (Assumption \ref{ass:smooth}), we have
\begin{align}\label{eq:g-hessian-2-1}
    &\nabla^2_{\boldsymbol{\Theta_{m_k}}}f_{m_k} \preceq L_{m_k} I_k, ~~\text{$I_k\in\R^{\text{dim}(\boldsymbol{\Theta_{m_k}})}$ is an identity matrix} \nonumber\\
    \implies& \nabla^2_{\boldsymbol{\hat{\Theta}_{mm}}}f_{m_k} \preceq L_{m_k} I_0 \nonumber\\
    \implies& v^\top(\nabla^2_{\boldsymbol{\hat{\Theta}_{mm}}}f_{m_k} - L_{m_k} I_0) v \leq 0 ~~ \text{for any }v\in\R^{\text{dim}(\boldsymbol{\hat{\Theta}_{mm}})} \nonumber\\
    \implies& \sum_{k=1}^2\Psi\left( z_k \right)v^\top(\nabla^2_{\boldsymbol{\hat{\Theta}_{mm}}}f_{m_k} - L_{m_k} I_0) v \leq 0 \nonumber\\
    \implies& v^\top\left(\sum_{k=1}^2\Psi\left( z_k \right) \nabla^2_{\boldsymbol{\hat{\Theta}_{mm}}}f_{m_k} - \sum_{k=1}^2L_{m_k} I_0 \right) v \leq 0 \nonumber\\
    \implies& \sum_{k=1}^2\Psi \nabla^2_{\boldsymbol{\hat{\Theta}_{mm}}}f_{m_k} \preceq \sum_{k=1}^2 L_{m_k} I_0.
\end{align}
Next, considering the second term of \eqref{eq:g-hessian-2}, we have for any $v\in\R^{\text{dim}(\boldsymbol{\hat{\Theta}_{mm}})}$, 
\begin{align}
    &v^\top \Bigg[ \sum_{k=1}^2\Psi\left( z_k \right)\nabla_{\boldsymbol{\hat{\Theta}_{mm}}}f_{m_k}\nabla_{\boldsymbol{\hat{\Theta}_{mm}}}f_{m_k}^\top - \left(\sum_{k=1}^2\Psi\left( z_k \right) \nabla_{\boldsymbol{\hat{\Theta}_{mm}}}f_{m_k} \right)\left(\sum_{k=1}^2\Psi\left( z_k \right) \nabla_{\boldsymbol{\hat{\Theta}_{mm}}}f_{m_k} \right)^\top \nonumber\\ 
    &~~- \sum_{k=1}^2 L^2_{m_k, 1}I_0\Bigg] v \nonumber\\
    &= \sum_{k=1}^2\Psi\left( z_k \right) y_k^2 - \left(\sum_{k=1}^2\Psi\left( z_k \right)y_k \right)^2 - \Vert v \Vert^2 \sum_{k=1}^2 L^2_{m_k, 1}, ~~\text{$y_k = v^\top\nabla_{\boldsymbol{\hat{\Theta}_{mm}}}f_{m_k}$ for $k\in\{1, 2\}$} \nonumber\\
    &\leq \sum_{k=1}^2\Psi\left( z_k \right) y_k^2 - \Vert v \Vert^2 \sum_{k=1}^2 L^2_{m_k, 1} \nonumber\\
    &\leq \sum_{k=1}^2\Psi\left( z_k \right) \Vert \nabla_{\boldsymbol{\hat{\Theta}_{mm}}}f_{m_k} \Vert^2 \Vert v \Vert^2 - \Vert v \Vert^2 \sum_{k=1}^2 L^2_{m_k, 1}, ~~\text{due to Cauchy-Schwarz inequality}\\
    &\leq \Vert v \Vert^2\sum_{k=1}^2\Psi\left( z_k \right)\left( \Vert \nabla_{\boldsymbol{\hat{\Theta}_{mm}}}f_{m_k} \Vert^2 -L^2_{m_k, 1}\right)\\
    &\leq 0,
\end{align}
where the last inequality is due to Assumption \ref{ass:lip}. The above inequality suggests that
\begin{equation}\label{eq:g-hessian-2-2}
    \sum_{k=1}^2\Psi\left( z_k \right)\nabla_{\boldsymbol{\hat{\Theta}_{mm}}}f_{m_k}\nabla_{\boldsymbol{\hat{\Theta}_{mm}}}f_{m_k}^\top - \left(\sum_{k=1}^2\Psi\left( z_k \right) \nabla_{\boldsymbol{\hat{\Theta}_{mm}}}f_{m_k} \right)\left(\sum_{k=1}^2\Psi\left( z_k \right) \nabla_{\boldsymbol{\hat{\Theta}_{mm}}}f_{m_k} \right)^\top \preceq \sum_{k=1}^2 L^2_{m_k, 1}I_0.
\end{equation}

Putting together \eqref{eq:g-hessian-2}, \eqref{eq:g-hessian-2-1}, and \eqref{eq:g-hessian-2-2}, we have
\begin{equation}\label{eq:g-hessian}
    \nabla^2_{\boldsymbol{\hat{\Theta}_{mm}}}g_\mu \preceq \sum_{k=1}^2 \left(L_{m_k} + \frac{L^2_{m_k, 1}}{\mu} \right) I_0.
\end{equation}
On the other hand, from the $L_{mm}$-smoothness of $f_{mm}$ (Assumption \ref{ass:smooth}), we have 
\begin{equation}\label{eq:fmm-hessian}
    \nabla^2_{\boldsymbol{\hat{\Theta}_{mm}}}f_{mm} \preceq L_{mm} I_0.
\end{equation}
Since $\hat{f}_{mm} = f_{mm} + \lambda g_\mu$, we can have
\begin{equation}\label{eq:fmm-hat-hessian}
    \nabla^2_{\boldsymbol{\hat{\Theta}_{mm}}}\hat{f}_{mm} \preceq \hat{L}_{mm} I_0,
\end{equation}
where $\hat{L}_{mm} := L_{mm} + \lambda \sum_{k=1}^2 \left(L_{m_k} + \frac{L^2_{m_k, 1}}{\mu} \right) > 0 $, which completes the proof.  
\end{proof}

\section{Proof of Theorem \ref{thm:penalty_equ}} \label{app:penalty-equ-proof}

To prove Theorem \ref{thm:penalty_equ}, it is crucial to analyze the properties of the maximum gap function $g(\boldsymbol{\Theta}):=\max_{{\scriptscriptstyle k\in[K]}} \left( f_{m_k}(\boldsymbol{\Theta_{m_k}}) - f^*_{m_k} \right)$, where $\boldsymbol{\Theta_{m}}=[\boldsymbol{\Theta_{m_1}}, \cdots, \boldsymbol{\Theta_{m_K}}]$. As we will show later, although it is non-differentiable, it still has some desired properties. 

\begin{Definition}[Semi-smoothness]
Let $\ell:\mathbb{R}^d\rightarrow\mathbb{R}$ be a locally Lipschitz function. The function $\ell$ is said to be semi-smooth if it is directionally differentiable at any $z\in\mathbb{R}^d$ and for any direction $d\in\mathbb{R}^d$ and any $J\in\partial \ell(z+d)$, it holds that 
\begin{align}
|\ell(z+d)-\ell(z)-J^\top d|={\cal O}(\|d\|^{2}) \text{ as } d\rightarrow 0
\end{align}
Furthermore, the function $\ell$ is said to be semi-smooth on $\mathcal{Z}\subset\mathbb{R}^d$ with $L$ if $\ell$ is  $\alpha$-order semi-smooth at every $z\in\mathcal{Z}$ and there exists $\bar d$ such that for all $\|d\|\leq\bar d$ and any $z\in\mathcal{Z}$, the following holds 
\begin{align}
|\ell(z+d)-\ell(z)-J^\top d|\leq L\|d\|^{2}. 
\end{align}
\end{Definition}
Both smooth and piecewise smooth functions are semi-smooth functions \citep{ulbrich2011semismooth}. 

\begin{Proposition}\label{prop:semismooth-KL}
Under Assumption \ref{ass:PL}--\ref{ass:smooth}, the maximum gap function $g(\boldsymbol{\Theta_{m}}):=\max_{{\scriptscriptstyle k\in[K]}} \left( f_{m_k}(\boldsymbol{\Theta_{m_k}}) - f^*_{m_k} \right)$ satisfies the semi-smooth condition with $L:=\max_{k\in [K]} L_{m_k}$ and the Kurdyka-Łojasiewicz (KL) condition, i.e. \begin{align*}
\operatorname{dist}(0,\partial g(\boldsymbol{\Theta_{m}}))^2 \geq \frac{\mu}{K}g(\boldsymbol{\Theta_{m}})  . 
\end{align*}
\end{Proposition}
\begin{proof}
Since piecewise smooth function is semi-smooth \citep{ulbrich2011semismooth}, $\max_{k\in [K]}:\mathbb{R}^K\rightarrow\mathbb{R}$ is semi-smooth with modulus $1$. Then since the class of semi-smooth functions is closed under composition \citep{ulbrich2011semismooth},  we know that $g(\boldsymbol{\Theta_{m}})$ is semi-smooth over $\boldsymbol{\Theta_{m}}$ with $L:=\max_{k\in [K]} L_{m_k}$. 

Define the active index set as $\mathcal{I}(\boldsymbol{\Theta_{m}})=\arg\max_{k\in [K]}\{ f_{m_k}(\boldsymbol{\Theta_{m_k}}) - f^*_{m_k} \}$ and the block vector $\mathbf{G}^{(k)}=\mathbf{e}_k \otimes \nabla f_{m_k}\left(\boldsymbol{\Theta_{m_k}}\right)$, where $\mathbf{e}_k \in \mathbb{R}^K$ is the $k$-th standard basis vector. This means
$$
\mathbf{G}^{(k)}=\left(0, \ldots, 0, \nabla f_{m_k}\left(\boldsymbol{\Theta_{m_k}}\right), 0, \ldots, 0\right) .
$$
Then the subgradient of $g(\boldsymbol{\Theta_{m}})$ is given by \citep{rockafellar2009variational}
\begin{align}\label{subgrad_set}
\partial g(\boldsymbol{\Theta_{m}})=\operatorname{conv}\{\mathbf{G}^{(k)} ~|~ k\in \mathcal{I}(\boldsymbol{\Theta_{m}})\}. 
\end{align}
For any $v\in\partial g(\boldsymbol{\Theta_{m}})$, there exists $a_k\geq 0$ such that $\sum_{k\in\mathcal{I}(\boldsymbol{\Theta_{m}})} a_k=1$ and $k$-th block of $v$ has the form 
\begin{align}
v_k= \begin{cases}a_k\nabla f_{m_k}\left(\boldsymbol{\Theta_{m}}\right) , & k\in\mathcal{I}(\boldsymbol{\Theta_{m}}), \\ 0 & k\not\in\mathcal{I}(\boldsymbol{\Theta_{m}}) .\end{cases}
\end{align}
Therefore, we have 
\begin{align*}
\|v\|^2&=\sum_{k=1}^K \|v_j\|^2=\sum_{k\in\mathcal{I}(\boldsymbol{\Theta_{m}})} \|a_k\nabla f_{m_k}\left(\boldsymbol{\Theta_{m_k}}\right)\|^2\\
&\stackrel{(a)}{\geq}\mu\sum_{k\in\mathcal{I}(\boldsymbol{\Theta_{m}})} a_k^2 (f_{m_k}\left(\boldsymbol{\Theta_{m_k}}\right)-f_{m_k}^*)\\
&\stackrel{(b)}{=}\mu\sum_{k\in\mathcal{I}(\boldsymbol{\Theta_{m}})} a_k^2 g\left(\boldsymbol{\Theta_{m}}\right)\\
&\stackrel{(c)}{\geq} \frac{1}{|\mathcal{I}(\boldsymbol{\Theta_{m}})|} \mu g\left(\boldsymbol{\Theta_{m}}\right)\geq \frac{\mu}{K}  g\left(\boldsymbol{\Theta_{m}}\right)
\end{align*}
where $(a)$ is because of Assumption \ref{ass:PL}, $(b)$ is due to $k\in\mathcal{I}(\boldsymbol{\Theta_{m}})$, and $(c)$ is because of the Cauchy–Schwarz inequality which gives $\sum_{k\in\mathcal{I}(\boldsymbol{\Theta_{m}})} a_k^2\geq \frac{1}{|\mathcal{I}(\boldsymbol{\Theta_{m}})|}$. Taking inf over both sides for $v\in\partial g(\boldsymbol{\Theta_{m}})$, this completes the proof because $\operatorname{dist}(0,\partial g(\boldsymbol{\Theta_{m}}))^2=\min_{v\in\partial g(\boldsymbol{\Theta_{m}})}\|v\|^2$. 
\end{proof}

% \begin{Theorem}[Relations of the penalty reformulation and the relaxed problem]\label{thm:penalty_equ}
% Under Assumption \ref{ass:PL}--\ref{ass:lip}, let $(\boldsymbol{\Theta_{mm}^*}, \boldsymbol{\Theta_{m_k}^*})$ be an $\epsilon_\lambda$-global (or local) solution of the penalty problem ~\eqref{eq:mimo} with $\lambda$. Then if $\lambda = \mathcal{O}(\epsilon_\lambda^{-1})$, then $(\boldsymbol{\Theta_{mm}^*}, \boldsymbol{\Theta_{m_k}^*})$ is an $\epsilon_\lambda$-global (or local) solution to the relaxed problem in~\eqref{eq:mml-via-moo-relax}, with some $\epsilon = \mathcal{O}(\epsilon_\lambda^{2})$. 
% \end{Theorem} 

\begin{proof}[Proof of Theorem \ref{thm:penalty_equ}]
We only prove the local solution relations, and the corresponding global result then follows by extending the local region to the entire domain. Since $(\boldsymbol{\Theta_{mm}^*}, \boldsymbol{\Theta_{m_k}^*})$ is the $\epsilon_\lambda$-local solution of penalty problem ~\eqref{eq:mimo}, then there exists a local neighborhood of $(\boldsymbol{\Theta_{mm}^*}, \boldsymbol{\Theta_{m_k}^*})\in\mathbb{B}((\boldsymbol{\Theta_{mm}^*}, \boldsymbol{\Theta_{m_k}^*}),R)$ with $R>0$ such that for any $(\boldsymbol{\Theta_{mm}}, \boldsymbol{\Theta_{m_k}})\in\mathbb{B}((\boldsymbol{\Theta_{mm}^*}, \boldsymbol{\Theta_{m_k}^*}),R))$, we have 
\begin{align}\label{penalty_approximate2}
f_{mm}(\boldsymbol{\Theta_{mm}})+\lambda \max_{{\scriptscriptstyle k\in[K]}} \left( f_{m_k}(\boldsymbol{\Theta_{m_k}}) - f^*_{m_k} \right)\geq f_{mm}(\boldsymbol{\Theta_{mm}^*})+\lambda \max_{{\scriptscriptstyle k\in[K]}} \left( f_{m_k}(\boldsymbol{\Theta_{m_k}^*}) - f^*_{m_k} \right)-\epsilon_\lambda 
\end{align}
Specially, for any $(\boldsymbol{\Theta_{mm}}, \boldsymbol{\Theta_{m_k}})\in\mathbb{B}((\boldsymbol{\Theta_{mm}^*}, \boldsymbol{\Theta_{m_k}^*}),R))$ with $$\max_{{\scriptscriptstyle k\in[K]}} \left( f_{m_k}(\boldsymbol{\Theta_{m_k}}) - f^*_{m_k} \right)\leq \max_{{\scriptscriptstyle k\in[K]}} \left( f_{m_k}(\boldsymbol{\Theta_{m_k}^*}) - f^*_{m_k} \right)=:\epsilon$$
we have 
\begin{align}\label{epsilon_optimal2}
f_{mm}(\boldsymbol{\Theta_{mm}})\geq f_{mm}(\boldsymbol{\Theta_{mm}^*})-\epsilon_\lambda 
\end{align}
Therefore, $(\boldsymbol{\Theta_{mm}^*}, \boldsymbol{\Theta_{m_k}^*})$ is $\epsilon_\lambda$-local-optimal to the relaxed problem if $\epsilon:=\max_{{\scriptscriptstyle k\in[K]}} \left( f_{m_k}(\boldsymbol{\Theta_{m_k}^*}) - f^*_{m_k} \right)$ is small.  

% We will use separate proofs for the unconstrained and constrained settings, respectively. 

% \vspace{0.2cm}

% \noindent\textbf{Case 1. Unconstrained setting where $\mathcal{Y}=\mathbb{R}^{d_y}$. } 

Without loose of generality (W.L.O.G), we assume $0\notin\nabla_{\boldsymbol{{\Theta}_{m_k}}} f_{mm}(\boldsymbol{\Theta_{mm}^*})+\lambda \partial_{\boldsymbol{{\Theta}_{m_k}}}\max_{{\scriptscriptstyle k\in[K]}} \left( f_{m_k}(\boldsymbol{\Theta_{m_k}^*}) - f^*_{m_k} \right)$. Denote $\boldsymbol{\hat{\Theta}_{mm}^*}:=[\boldsymbol{{\Theta}_{mm}^*},\{\boldsymbol{{\Theta}_{m_k}^*}\}_{k=1}^K]$, and letting ${ \boldsymbol{\hat{\Theta}_{m_k}^\prime}}=[\boldsymbol{{\Theta}_{mm}^\prime},\{\boldsymbol{{\Theta}_{m_k}^\prime}\}_{k=1}^K]=\boldsymbol{\hat{\Theta}_{m_k}^*}-\frac{ru}{\|u\|}$ where $$u\in \nabla_{\boldsymbol{{\Theta}_{m_k}}} f_{mm}(\boldsymbol{\Theta_{mm}^*})+\lambda \partial_{\boldsymbol{{\Theta}_{m_k}}}\max_{{\scriptscriptstyle k\in[K]}} \left( f_{m_k}(\boldsymbol{\Theta_{m_k}^*}) - f^*_{m_k} \right)$$ 
is some subgradient that will be chosen later and $r\leq R$. Then according to Proposition \ref{prop:semismooth-KL} and Taylor expansion of $f$, we have  
\begin{align*}
-\epsilon_\lambda&\leq f_{mm}(\boldsymbol{\Theta_{mm}^\prime})+\lambda \max_{{\scriptscriptstyle k\in[K]}} \left( f_{m_k}(\boldsymbol{\Theta_{m_k}^\prime}) - f^*_{m_k} \right) -(f_{mm}(\boldsymbol{\Theta_{mm}^*})+\lambda \max_{{\scriptscriptstyle k\in[K]}} \left( f_{m_k}(\boldsymbol{\Theta_{m_k}^*}) - f^*_{m_k} \right))\\
&\leq u^\top (\boldsymbol{\Theta_{m_k}^\prime}-\boldsymbol{\Theta_{m_k}^*})+\frac{L_{mm}+\lambda\max L_{m_k}}{2}\|\boldsymbol{\Theta_{m_k}^\prime}-\boldsymbol{\Theta_{m_k}^*}\|^{2}\leq -r\|u\|+\frac{L_{mm}+\lambda\max L_{m_k}}{2}r^{2}
\end{align*}
Therefore, when choosing $r={\cal O}(R(\epsilon_\lambda/\lambda)^{\frac{1}{2}})\leq R$, we have 
\begin{align*}
&~~~~\operatorname{dist}(0,\nabla_{\boldsymbol{{\Theta}_{m_k}}} f_{mm}(\boldsymbol{\Theta_{mm}^*})+\lambda \partial_{\boldsymbol{{\Theta}_{m_k}}}\max_{{\scriptscriptstyle k\in[K]}} \left( f_{m_k}(\boldsymbol{\Theta_{m_k}^*}) - f^*_{m_k} \right))\\
&\leq\|u\|\leq\frac{L_{mm}+\lambda\max L_{m_k}}{2}r+\frac{\epsilon_\lambda}{r}={\cal O}(\lambda r +\epsilon_\lambda/r)={\cal O}(\epsilon_\lambda^{\frac{1}{2}}\lambda^{\frac{1}{2}})
\end{align*}
Let $u_g\in\partial_{\boldsymbol{{\Theta}_{m_k}}}\max_{{\scriptscriptstyle k\in[K]}} \left( f_{m_k}(\boldsymbol{\Theta_{m_k}^*}) - f^*_{m_k} \right)$ be the one with $u=\nabla_{\boldsymbol{{\Theta}_{m_k}}} f_{mm}(\boldsymbol{\Theta_{mm}^*})+\lambda u_g$, then by Cauchy-Schwartz inequality, 
\begin{align*}
\operatorname{dist}(0,\partial_{\boldsymbol{{\Theta}_{m_k}}}\max_{{\scriptscriptstyle k\in[K]}}\left( f_{m_k}(\boldsymbol{\Theta_{m_k}^*}) - f^*_{m_k} \right))\leq\|u_g\|\leq\frac{\|u\|}{\lambda}+\frac{\|\nabla_{\boldsymbol{{\Theta}_{m_k}}} f_{mm}(\boldsymbol{\Theta_{mm}^*})\|}{\lambda}\leq {\cal O}\left(\sqrt{\frac{\epsilon_\lambda}{\lambda}}+\frac{1}{\lambda}\right)={\cal O}(\epsilon_\lambda). 
\end{align*}
where the last equality is obtained by setting $\lambda={\cal O}(\epsilon_\lambda^{-1})$. 
Furthermore, with Proposition \ref{prop:semismooth-KL}, we have 
\begin{align*}
\frac{\mu}{K}\left(\max_{{\scriptscriptstyle k\in[K]}}\left( f_{m_k}(\boldsymbol{\Theta_{m_k}^*}) - f^*_{m_k} \right))\right)\leq{\cal O}(\epsilon_\lambda^2)
\end{align*}
Therefore, we have $\epsilon:=\max_{{\scriptscriptstyle k\in[K]}}\left( f_{m_k}(\boldsymbol{\Theta_{m_k}^*}) - f^*_{m_k} \right))\leq{\cal O}(\epsilon_\lambda^2)$. Combining with \eqref{epsilon_optimal2},  $(\boldsymbol{\Theta_{mm}^*}, \boldsymbol{\Theta_{m_k}^*})$ is $\epsilon_\lambda$-local-optimal to the relaxed problem with $\epsilon$.  
\end{proof}

\section{Additional Experiments and Details}\label{app:experiments}

In this section, we provide implementation details for MIMO and other baselines in CREMA-D, AV-MNIST, UR-FUNNY, and CMU-MOSEI multi-modal sensor benchmark datasets. Furthermore, we provide an ablation of MIMO parameters. For implementing uni-modal sensor learning, vanilla MSL, and balanced multi-modal sensor learning methods, we use the implementations of \citep{li2023boosting}\footnote{\url{https://github.com/lihong2303/AGM_ICCV2023.git}} and \citep{peng2022balanced}\footnote{\url{https://github.com/GeWu-Lab/OGM-GE_CVPR2022}}.  MOO baselines are implemented by us. To be comparable with MOO methods, for uni-modal sensor accuracy results for vanilla-MSL and balanced multi-modal sensor learning methods, we train a dedicated uni-modal sensor head using the features extracted from the uni-modal sensor encoders, in addition to the fusion heads.  We provide an average of over three seeds for our experiments, with an error bar of one standard deviation. All experiments are run using 2 NVIDIA GeForce RTX 3090 GPUs and 4 NVIDIA RTX A6000 GPUs.

\paragraph{CREMA-D \citep{cao2014crema}.} This dataset is for multi-modal speech emotion recognition using facial and vocal expressions. The dataset includes six common emotions: anger, happiness, sadness, neutrality, disgust, and fear. It is randomly divided into a training set with $6,027$ samples, a validation set with $669$ samples, and a testing set with $745$ samples. For method-specific parameter configurations for implementing uni-modal sensor learning, vanilla MSL, and balanced multi-modal sensor learning baselines we use the default setting of implementation by \citep{li2023boosting}. All methods are optimized with SGD optimizer with an initial stepsize of $10^{-3}$, for $100$ epochs.

\paragraph{UR-Funny \citep{hasan2019ur}.}The dataset was created for affective computing tasks that detect humor through the use of words (text), gestures (vision), and prosodic cues (acoustic). This dataset was collected from TED talks and utilizes an equal number of binary labels for each sample. For method-specific parameter configurations for implementing uni-modal sensor learning, vanilla MSL, and balanced multi-modal sensor learning baselines we use the default setting of implementation by \citep{li2023boosting}. All methods are optimized with SGD optimizer with an initial stepsize of $10^{-3}$, for $100$ epochs. Note that OGM-GE method is not implemented in this dataset since OGM-GE is by design only a two-modality balanced multi-modal sensor learning method.

\noindent \textbf{Kinetics-Sounds \citep{arandjelovic2017look}} This dataset is derived from the larger Kinetics dataset~\cite{kay2017kinetics}, which contains 400 classes of YouTube videos. Kinetics-Sounds specifically includes 31 human action categories that were selected for their potential to be both seen and heard, such as playing musical instruments. Each video clip is 10 seconds long, manually labeled for human actions via Mechanical Turk, and trimmed to center around the action of interest. The dataset comprises 19,000 video clips in total, with a split of 15,000 for training, 1,900 for validation, and 1,900 for testing.

\noindent \textbf{VGGSound \citep{chen2020vggsound}} This dataset is a comprehensive video dataset consisting of 309 classes that span a broad spectrum of audio events encountered in everyday contexts. The videos, each lasting 10 seconds, are recorded in real-world settings with an audio-visual alignment, meaning the source of the sound is visible. The dataset is partitioned following the original split in~\cite{chen2020vggsound}. For our experiments, 168,618 videos are used for training and validation, while 13,954 are allocated for testing due to the unavailability of some YouTube videos.

\begin{table}[t]
\begin{center}
\caption{Comparison using AV-MNIST dataset.}\label{tab:avminst}
\begin{tabular}{l|cccc}
\hline
Method & Acc {\tiny(\%)} & Acc$_a$ {\tiny(\%)} & Acc$_v$ {\tiny(\%)} & $t(s)$ \\
\hline
$\mathbf{\mathcal{C}^a}$    &  - & \textbf{42.32}{\tiny $\pm$ 0.17} & - & 0.019{\tiny $\pm$ 0.003}\\
$\mathbf{\mathcal{C}^v}$    &  - & - & \textbf{65.05}{\tiny $\pm$ 0.08}  & 0.019{\tiny $\pm$ 0.003}  \\
\textbf{Vanilla MSL}  &  71.70{\tiny $\pm$ 0.11} & 39.98{\tiny $\pm$ 0.46} & 64.67{\tiny $\pm$ 0.24}  & 0.015{\tiny $\pm$ 0.002}  \\
\hline
\textbf{MSES}   &  71.61{\tiny $\pm$ 0.02} & 39.92{\tiny $\pm$ 0.65} & 64.66{\tiny $\pm$ 0.26}  & 0.017{\tiny $\pm$ 0.002}  \\
\textbf{MSLR}   &  71.96{\tiny $\pm$ 0.12} & 40.5{\tiny $\pm$ 0.79} & 64.50{\tiny $\pm$ 0.14}  & 0.018{\tiny $\pm$ 0.002}  \\
\textbf{OGM-GE}  &  71.70{\tiny $\pm$ 0.11} & 39.98{\tiny $\pm$ 0.46} & 64.67{\tiny $\pm$ 0.24}  & 0.055{\tiny $\pm$ 0.026}  \\
\textbf{AGM}   &  70.92{\tiny $\pm$ 0.81} & 29.16{\tiny $\pm$ 0.92} & 61.76{\tiny $\pm$ 1.98}  & 0.060{\tiny $\pm$ 0.021}  \\
\hline
\textbf{EW}     &  72.22{\tiny $\pm$ 0.04} & 41.63{\tiny $\pm$ 0.26} & 41.77{\tiny $\pm$ 0.18}  & 0.019{\tiny $\pm$ 0.011} \\
\textbf{MGDA}     &  72.15{\tiny $\pm$ 0.50} & 41.63{\tiny $\pm$ 0.12} & 41.92{\tiny $\pm$ 0.22}  & 0.086{\tiny $\pm$ 0.003}  \\
\textbf{MMPareto}     &  72.42{\tiny $\pm$ 0.21} & 41.64{\tiny $\pm$ 0.35} & 41.83{\tiny $\pm$ 0.25}  & 0.085{\tiny $\pm$ 0.002}  \\
% \textbf{MoCo}     &  72.42 & 42.25 & 42.25 \\
\textbf{MIMO (ours)}     &  \textbf{72.77}{\tiny $\pm$ 0.10} & 42.21{\tiny $\pm$ 0.38} & 42.25{\tiny $\pm$ 0.34}  &  \underline{0.018}{\tiny $\pm$ 0.004} \\
\bottomrule
\end{tabular}
\end{center}
\end{table}

\paragraph{AV-MNIST \citep{vielzeuf2018centralnet}.}In addition to the experiments given in the main text, here we provide a comparison of MIMO with proposed baselines in the AV-MNIST dataset. This dataset is for multi-media classification tasks by combining visual and audio features. The first modality, a noisy image, consists of $28 \times 28$ PCA-projected MNIST images. The second modality, audio, consists of audio samples represented by $112 \times 122$ spectrograms. The entire dataset comprises $70,000$ samples, divided into a training set and a validation set at a ratio of $6:1$. Additionally, $10\%$ of the samples from both the training set and validation set were randomly selected to create a development set. For method-specific parameter configurations for implementing uni-modal sensor learning, vanilla MSL, and balanced multi-modal sensor learning baselines we use the default setting of implementation by \citep{li2023boosting}. All methods are optimized with SGD optimizer with an initial stepsize of $10^{-3}$, for $100$ epochs. From Table \ref{tab:avminst},  it can be seen that MIMO can outperform the best performing modality significantly, and perform comparably or better compared to other baselines. Moreover, when considering the subroutine execution times, MIMO is $\sim 4$ faster compared to the next best performing method (MMPareto). These results demonstrate that MIMO can achieve superior performance with balanced multi-modal sensor learning, incurring only a minimal increase in computational time.

\paragraph{CMU-MOSEI \citep{zadeh2018multimodal}.} This dataset was compiled for sentence-level sentiment analysis and emotion recognition, consisting of 23,454 movie review clips drawn from over 65.9 hours of YouTube video featuring 1,000 speakers. As per the implementation in \citep{li2023boosting}, we utilize only the text and audio modalities, and the train/validation/test sets are split into 16,327, 1,871, and 4,662 samples, respectively. All methods are optimized with SGD optimizer with an initial stepsize of $10^{-4}$, for $100$ epochs. The experiment results for CMU-MOSEI dataset is given in Table \ref{tab:cmu-mosei}. It can be seen that while MIMO can outperform the best performing modality, vanilla MSL and AGM fail to achieve this. Moreover, when considering the subroutine execution times, MIMO is only slightly slower than vanilla MSL. These results demonstrate that MIMO can achieve superior performance with balanced multi-modal sensor learning, incurring only a minimal increase in computational time.

\paragraph{AVE \citep{tian2018audio}.} This dataset is an audio-visual dataset designed for event localization, encompassing 28 event classes. It contains 4,143 videos, each 10 seconds long, with synchronized audio and visual tracks, along with frame-level annotations. The experiment results for AVE dataset is given in Table \ref{tab:ave}.  It can be seen that, similar to CMU-MOSEI dataset, MIMO can achieve superior performance compared to the baselines with a subroutine time similar to that of vanilla MSL. Furthermore, it can be seen that MIMO can outperform unimodal baselines consistently, while vanilla MSL and AGM fail to achieve this.

\noindent
\begin{minipage}[t]{0.48\linewidth}
\centering
\setlength{\tabcolsep}{0.1em}
\captionof{table}{Comparison using CMU-MOSEI dataset.}\label{tab:cmu-mosei}
{\small
\begin{tabular}{lcccc}
\toprule
Method & Acc {\tiny(\%)} & Acc$_t$ {\tiny(\%)} & Acc$_a$ {\tiny(\%)} & $t(s)$ \\
\midrule
Text    &  - & \textbf{81.53}{\tiny $\pm$ 0.16} & - & 0.100{\tiny $\pm$ 0.009}\\
Audio    &  - & - & 74.12{\tiny $\pm$ 0.06} & 0.101{\tiny $\pm$ 0.009}  \\
\hline
\textbf{MSL}     &  80.33{\tiny $\pm$ 0.18} & 73.89{\tiny $\pm$1.58} & 73.08{\tiny $\pm$ 0.01} & \underline{0.279}{\tiny $\pm$ 0.009} \\
\textbf{AGM}   &  80.28{\tiny $\pm$ 0.19} & 79.64{\tiny $\pm$0.19} & 78.23{\tiny $\pm$ 0.65} & 0.304{\tiny $\pm$ 0.007}  \\
\textbf{MIMO}    &  \textbf{81.62}{\tiny $\pm$ 0.06} & 81.44{\tiny $\pm$ 0.10} & \textbf{81.36} {\tiny $\pm $0.23}  & 0.287{\tiny $\pm$ 0.009} \\
\bottomrule
\end{tabular}}
\end{minipage}%
\begin{minipage}[t]{0.48\linewidth}
\centering
\setlength{\tabcolsep}{0.1em}
\captionof{table}{Comparison using AVE dataset.}\label{tab:ave}
{\small
\begin{tabular}{lcccc}
\toprule
Method & Acc {\tiny(\%)} & Acc$_a$ {\tiny(\%)} & Acc$_v$ {\tiny(\%)} & $t(s)$ \\
\midrule
Audio    &  - & $66.03${\tiny $\pm 0.28$} & - & $0.010${\tiny $\pm 0.001$}\\
Visual    &  - & - & $63.82${\tiny $\pm 0.99$} & $0.011${\tiny $\pm 0.001$} \\
\hline
\textbf{MSL}     &  $67.41${\tiny $\pm 0.30$} & $33.46${\tiny $\pm 1.49$} & $56.61${\tiny $\pm 1.91$} & $\underline{0.027}${\tiny $\pm 0.002$}\\
\textbf{AGM}   &  $72.54${\tiny $\pm 1.13$} & $54.73${\tiny $\pm 1.29$} & $50.92${\tiny $\pm 1.98$} & $0.161${\tiny $\pm 0.002$}\\
\textbf{MIMO}    &  $\bm{73.69}${\tiny $\pm 0.24$} & $\bm{72.69}${\tiny $\pm 0.22$} & $\bm{71.85}$ {\tiny $\pm 0.47$}  & $0.029${\tiny $\pm 0.002$}\\
\bottomrule
\end{tabular}}
\end{minipage}

\paragraph{MOO baseline implementation.} We implement the equal weighting (EW) method by optimizing the sum of uni-modal sensor and multi-modal sensor fusion objectives. For implementing MGDA, we consider the shared and non-shared parameters separately. Specifically, we solve the MGDA sub-problem \citep{fliege2019complexity} using the gradient of uni-modal sensor and multi-modal sensor fusion objectives with respect to encoder weights for each modality encoder. Non-shared parameters like multi-modal and uni-modal sensor heads are updated using normal SGD updates. For MMPareto, we follow the method described in \citep{wei2024innocent}, with updating shared and non-shared parameters similar to that of MGDA implementation.

\paragraph{Subroutine time calculation.} For calculating the subroutine times of MIMO and baselines, we compute the average computation time taken for the subroutine used for balancing modalities (if any) and updating the model parameters per batch. Since run times differ for different seeds due to background processes, we report the average subroutine times (over $100\times$ number of batches per epoch) calculated using one seed.

\begin{table}
    \centering
    \caption{MIMO parameters}
    \begin{tabular}{|l|c|c|c|c|c|c|c|}
    \hline
         & CREMA-D  & UR-Funny & Kinetics-Sound & VGGSound & AV-MNIST & CMU-MOSEI & AVE\\
         \hline\hline
       $\lambda$  & 100  & 10 & 10 & 10 & 10  & 100 & 100\\
       $\mu$      & 0.01 & 1.0 & 0.01 & 0.01 & 0.1 & 0.001 & 0.01\\
       \hline
    \end{tabular}
    \label{tab:mimo-param}
\end{table}

\paragraph{MIMO parameters and implementation.} MIMO-specific parameters used for each dataset are given in Table \ref{tab:mimo-param}. We coarsely tune $\lambda$ parameter in the grid $\{1, 10, 100\}$, and $\mu$ parameter in the grid $\{0.001, 0.01, 0.1, 1.0\}$ for each dataset. To ensure numerical stability during MIMO implementation, when the loss values become large, we increase the value of $\mu$ two times until the exponents in the MIMO objective fall within the permissible range for the datatype. The reported subroutine times include the computation time required for this adjustment.

\end{document}